\theoremstyle{plain}
\theoremstyle{remark}
\newtheorem{remark}{Remark}
\theoremstyle{definition}
\newtheorem{definition}{Definition}
\theoremstyle{plain}
\newtheorem{theorem}{Theorem}
\newtheorem{corollary}{Corollary}
\newtheorem{prop}{Proposition}
\newtheorem{problem}{Problem}
\newtheorem{lemma}{Lemma}
\let\oldnl\nl
\newcommand{\nonl}{\renewcommand{\nl}{\let\nl\oldnl}}
\DeclareMathOperator*{\argmax}{argmax} 
\DeclareMathOperator*{\argmin}{argmin} 
\begin{document}
\title{Robust Multiple-Path Orienteering Problem:\\ Securing Against Adversarial Attacks
\thanks{This work is supported by the National Science Foundation under Grant No. 1943368, and the Office of Naval Research under Grant No. N000141812829.}
\thanks{
\textsuperscript{$\dagger$}Department of Electrical and Computer Engineering, University of Maryland, College Park, MD 20742 USA email:gyshi@terpmail.umd.edu.}
\thanks{
\textsuperscript{$\ddagger$}GRASP Laboratory,
University of Pennsylvania, Philadelphia, PA, USA e-mail:
lfzhou@seas.upenn.edu. The author was with the Department of Electrical and Computer Engineering, Virginia Tech, Blacksburg, VA, USA when part of the work was completed.}
\thanks{\textsuperscript{$\dagger\dagger$}Department of Computer Science, University of Maryland, College Park, MD 20742 USA email: tokekar@umd.edu}
}

\author{\IEEEauthorblockN{Guangyao Shi\textsuperscript{$\dagger$}} \and \IEEEauthorblockN{Lifeng Zhou\textsuperscript{$\ddagger$}} \and 
\IEEEauthorblockN{Pratap Tokekar\textsuperscript{$\dagger\dagger$}}\\
}
\maketitle

\begin{abstract}
The multiple-path orienteering problem asks for paths for a team of robots that maximize the total reward collected while satisfying budget constraints on the path length. This problem models many multi-robot routing tasks such as exploring unknown environments and information gathering for environmental monitoring. In this paper, we focus on how to make the robot team robust to failures when operating in adversarial environments.  
We introduce the Robust Multiple-path Orienteering Problem (RMOP) where we seek worst-case guarantees against an adversary that is capable of attacking at most $\alpha$ robots. 
We consider two versions of this problem: RMOP offline and RMOP online. In the offline version, there is no communication or replanning when robots execute their plans and our main contribution is a general approximation scheme with a bounded approximation guarantee that depends on $\alpha$ and the approximation factor for single robot orienteering. In particular, we show that the algorithm yields a (i) constant-factor approximation when the cost function is modular; (ii) $\log$ factor approximation when the cost function is submodular; and (iii) constant-factor approximation when the cost function is submodular but the robots are allowed to exceed their path budgets by a bounded amount. In the online version, RMOP is modeled as a two-player sequential game and solved adaptively in a receding horizon fashion based on Monte Carlo Tree Search (MCTS). 
In addition to theoretical analysis, we perform simulation studies for ocean monitoring and tunnel information-gathering applications to demonstrate the efficacy of our approach.
\end{abstract}

\begin{IEEEkeywords}
Multi-robot system, Orienteering, robust planning, adversarial attacks.
\end{IEEEkeywords}

\section{Introduction}

The Orienteering Problem (OP) is that of determining a path, whose length is less than a given budget, from a given starting vertex that maximizes the total reward collected along the path~\cite{vansteenwegen2011orienteering}. The reward depends on the vertices visited along the path. The OP\footnote{Unless specified otherwise, OP refers to single robot orienteering.} naturally models informative-path planning: a robot is tasked to gather as much information from the environment as possible within a given time or energy budget. 
\begin{figure}[htbp]
\centerline{\includegraphics[scale=0.7]{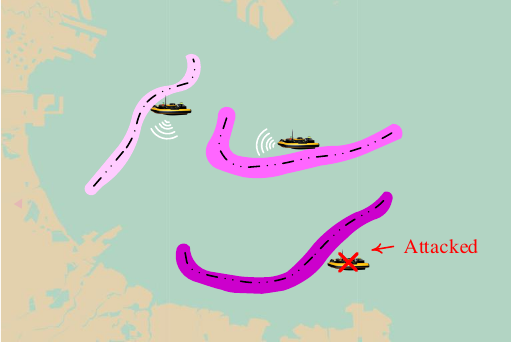}}
\caption{Case study of monitoring a marine environment with aquatic robots. The robots are tasked with finding informative paths to gather data. The darker the color of the path is, the more valuable path since it gathers information from a more important region. We investigate the question of how the robots should plan their paths if we expect some of the robots to fail due to adversarial elements or natural causes\textcolor{red}{.}
}
\label{fig:illustration}
\end{figure}
For example, in \cite{jorgensen2017matroid,thakur2013planning,sadeghi2019minimum}, ocean monitoring, opportunistic surveillance, and 3D reconstruction tasks are formulated as the OP or its variants. In general, the OP is NP-hard but there are constant-factor approximation algorithms for many variants~\cite{blum2003approximation}. This includes the Multiple-path Orienteering Problem (MOP)~\cite{blum2003approximation} where the goal is to design paths for $N$ robots such that the sum of the rewards collected by all the robots is maximized.
In this paper, we introduce the robust variant of OP. Specifically, we introduce the Robust Multiple-Path Orienteering Problem (RMOP) motivated by scenarios where robots operate in adversarial or failure-prone environments. 



Figure~\ref{fig:illustration} shows a motivating scenario where a team of underwater robots is tasked with gathering data in an ocean. However, some robots in the team may fail to complete their paths either due to adversarial attacks~\cite{sless2014multi} or hardware malfunction~\cite{carlson2004follow}. If a robot fails, then the data gathered by it is lost. Our goal is to provide efficient planning and coordination algorithms that are resilient to such failures.

Building robot teams that are robust to adversarial attacks is emerging as an important research area~\cite{prorok2019guest,guerrero2017formations,prorok2019redundant,saulnier2017resilient}. Our approach differs from classical fault-tolerant frameworks~\cite{spong1992robust,ting1994control,crestani2012fault} that focus on making individual robots robust to failures. Instead, we focus on the question of how should the team coordinate their actions to improve redundancy in their plans such that even if some robots fail, the overall performance of the team will not drop significantly. As such, our work is completely different from the work on making individual robots robust. 

In this paper, we focus on the RMOP to make progress towards the aforementioned broader goal. The RMOP seeks plans for a team of $N$ robots that guard against worst-case failures. Of course, in the worst-case, all $N$ robots may fail. To make it more meaningful we study the case where at most a given number $\alpha<N$ robots may fail. What we seek is to understand how the performance of the team will be affected as a function of $\alpha$.
We consider two types of RMOP, {in both of which we seek to find a path consisting of multiple steps for each robot}. In the offline RMOP in which robots cannot communicate with each other or the base station during tasks, our main contribution is an algorithmic scheme that uses a single robot OP solution as a subroutine. Choosing an appropriate subroutine allows us to investigate three variants of the original problem. In the general version, the reward collected by an individual robot is a submodular function of the vertices along the path. Submodularity is the property of diminishing returns~\cite{clark2016submodularity}. Many information gathering measures such as mutual information~\cite{krause2008near} and coverage area~\cite{krause2011submodularity} are known to be submodular. We also study special cases where the reward function is strictly modular (i.e., additive) and where the budget constraint for each robot can be relaxed by a bounded amount. In the online RMOP, we model the problem as a sequential two-player game and propose an {adaptive strategy} based on MCTS, and the problem is solved in a receding horizon fashion {with the history of the observed attack taken into account}.


\subsection{Related work} \label{sec:relwork}
The orienteering problem has been researched extensively by both theoretical computer science and operations research communities. The review by Vansteenwegen~\cite{vansteenwegen2011orienteering} summarizes various algorithms for OP and its variants. We highlight the results most closely related to our work. Blum et al.~\cite{blum2003approximation} presented a polynomial-time 4--approximation for OP when the objective function is modular. This result is then extended to yield a 5--approximation for the MOP assuming all robots start at different vertices. If the reward function is submodular, Chekuri and Pal~\cite{recursivegreedy2005} present a recursive greedy algorithm for a single robot that yields a $O(\log(OPT))$ approximation algorithm, where $OPT$ is the reward collected by the optimal algorithm. The algorithm runs in quasi-polynomial time. 

Singh et al.~\cite{singh2009efficient} showed how to use OP and MOP for active information gathering to learn a spatial model of the environment represented by Gaussian Processes. Their algorithms sequentially find paths for each robot using the single-robot algorithms by Blum et al.~\cite{blum2003approximation} and Chekuri and Pal~\cite{recursivegreedy2005} as subroutines.
Atanasov et al.~\cite{atanasov2015decentralized} recently presented a decentralized version for multi-robot information gathering along similar lines as~\cite{singh2009efficient}. They use a submodular objective function but solve a finite horizon planning problem as opposed to OP. However, none of these works account for potential failures of the robots, as we do in RMOP.

Recently, Jorgensen et al. introduced the Matroids Team Surviving Orienteers Problem (MTSO)~\cite{jorgensen2017matroid} which does account for individual robot failures. They assume that there is some given probability of failure associated with every edge in the environment. The goal is to maximize the expected rewards while ensuring each path satisfies some survival chance constraints. MTSO is appropriate when the failures of robots are random and follow a known distribution. The version we study, the RMOP, accounts for worst-case failures which makes it better suited when operating in adversarial conditions or in stochastic conditions when worst-case guarantees are sought due to unknown probability distributions.

Our work builds on recent work on robust submodular maximization~\cite{orlin2016robust,tzoumas2017resilient,tzoumas2018resilient,schlotfeldt2018resilient,zhou2018resilient,zhou2019distributed} which selects sets that are robust to worst-case removal of some subset of items. The challenge in this framework is to solve the trade-off between too much overlap, thereby not enough coverage (i.e., reward) and too little overlap, thereby not enough redundancy. The conceptual idea in these papers is similar --- the final solution consists of two subsets, one that has enough redundancy to ensure robustness against worst-case removal and the other that has enough coverage to get good overall performance. Orlin et al.~\cite{orlin2016robust} term the former as ``copies'' whereas it is called ``baits'' in \cite{zhou2018resilient}. The robust submodular maximization formulation has been applied for multi-robot, multi-target tracking in centralized~\cite{zhou2018resilient} and decentralized settings~\cite{zhou2019distributed} as well as for active information gathering with multi-robot teams~\cite{schlotfeldt2018resilient}. 

We seek similar robustness guarantees as in the works mentioned in the previous paragraph. {The key technical advancement we make is that these prior work solve a single-step selection problem whereas we solve a multi-step planning problem.} As a result, the single robot problem in the prior work can be trivially solved optimally (amounts to selecting the best amongst a finite set of options), whereas in the RMOP the single robot problem (OP) itself is NP-Hard. While both \cite{zhou2018resilient} and \cite{schlotfeldt2018resilient} use their results for planning over a finite horizon, they make key assumptions that are limiting.
{
Schlotfeldt~\cite{schlotfeldt2018resilient}  considers the continuous counterpart of the combinatorial problem considered in this paper and they formulate the problem under the optimal control framework with one key assumption that the single robot as well as multi-robot information gathering problem (without attacks) can be solved optimally (c.f. Proposition 1). Zhou et al.~\cite{zhou2018resilient} repeatedly solve the one-step problem at each time step. Instead, we show how to use an approximate solution to the OP to yield a bounded approximation solution to the combinatorial problem RMOP.
}
\subsection{Contributions} 
The main contributions of this paper are as follows.
    We introduce the Robust Multiple-Path Orienteering Problem and consider two specific types of this problem.
    For the offline case, We present a general approximation scheme to solve RMOP. 
    We analyze the running time and the performance of the algorithm. In particular, we show that the approximation ratio is a constant of the
    approximation factor for single robot OP. 
    We show how to employ three single robot algorithms for modular and submodular OP as subroutines in our algorithm and analyze their performance. {For the online case, we model the problem as a sequential two-player game and propose an MCTS-based method to solve the problem.
    We evaluate the performance of our algorithm using simulations involving a case study of information gathering in a tunnel with a team of robots.}
In addition, we give an alternative and more complete proof on the bound of the sequential algorithm in \cite{singh2009efficient} which is of independent interest.

{
A preliminary version of this paper was presented in RSS 2020 \cite{shi2020robust}. Compared to our previous work which includes only the offline RMOP, we also consider the online RMOP in this paper and propose an MCTS-based strategy to solve the problem. New simulation results are also provided for the online RMOP.
}

The rest of the paper is organized as follows. We provide the necessary background on submodular functions and formally introduce the offline RMOP in Sec. \ref{Problem Description}. Next, the approximation algorithm is proposed in Sec. \ref{Proposed Approach}. The online RMOP is introduced in Sec. \ref{RMOP_with_communication} and the corresponding algorithm is explained in Sec. \ref{section:mcts}. Detailed analysis of the proposed algorithm is given in Sec. \ref{Algorithm Analysis}. Finally, simulation results are given in Sec. \ref{Simulation} and \ref{sim:RMOP with communication}.

\section{Problem Description}\label{Problem Description}
In this section, we formally describe the Robust Multiple Orienteering Problem. We start by introducing notations and conventions used in the paper. 

We use calligraphic fonts to denote sets (e.g. $\mathcal{A}$). Given a set $\mathcal{A}$, $2^{\mathcal{A}}$ denotes the power set of $\mathcal{A}$ and $|\mathcal{A}|$ denotes the cardinality of $\mathcal{A}$. Given another set $\mathcal{B}$, the set $\mathcal{A} \setminus \mathcal{B}$ denotes the set of elements in $\mathcal{A}$ but not in $\mathcal{B}$. Given a set $\mathcal{V}$, a set function $f: 2^{\mathcal{V}} \mapsto \mathbb{R}$, and an element $x \in \mathcal{V}$, $f(x)$ is a shorthand that denotes $f(\{x\})$. We use $f_{\mathcal{A}}(\mathcal{B})$ to denote $f(\mathcal{A} \cup \mathcal{B})-f(\mathcal{A})$.

We now define two useful properties of set functions.
\begin{definition}[Normalized Monotonicity]
For a set $\mathcal{V}$, a function $f: 2^{\mathcal{V}} \mapsto \mathbb{R}$ is called as normalized, monotonically non-decreasing if and only if for any $\mathcal{A} \subseteq \mathcal{A}^{\prime} \subseteq \mathcal{V}, f(\mathcal{A}) \leq f(\mathcal{A}^{\prime})$ and $f(\mathcal{A})=0$ if and only if $A=\emptyset$.
\end{definition}
As a short-hand, we refer to a normalized, monotonically non-decreasing function as simply a monotone function.

\begin{definition}[Submodularity]
For a set $\mathcal{V}$, a function $f: 2^{\mathcal{V}} \mapsto \mathbb{R}$ is submodular if and only if for any sets $\mathcal{A} \subseteq \mathcal{V}$ and $\mathcal{A}^{\prime} \subseteq \mathcal{V}$, we have $f(\mathcal{A})+f(\mathcal{A}^{\prime}) \geq f(\mathcal{A} \cup \mathcal{A}^{\prime})+f(\mathcal{A} \cap \mathcal{A}^{\prime})$;
\end{definition}

Let $G(\mathcal{V},\mathcal{E})$ be a graph. 
A path $\mathcal{P}$ in $G$ is an ordered sequence of non-repeated vertices. As a shorthand, we use $\mathcal{P}$ to denote both the path (ordered set) as well as the unordered set of vertices along the path. When we use $\mathcal{P}$ as the path (ordered set), $\mathcal{P}(i)$ denotes $i_{th}$ vertex in $\mathcal{P}$. Let $\mathcal{T}=2^{\mathcal{V}}$ denote the power set of $\mathcal{V}$. Intuitively, $\mathcal{T}$ is the superset of all possible sets of vertices that a robot may visit along its path. The cost of a path $\mathcal{P}$, denoted by $C(\mathcal{P})$, is the sum of the edge weights along the path. We assume that the edge weights are metric. We study the \emph{rooted} version of the problem where the path for robot $i$, denoted by $\mathcal{P}_i$, must begin at a specific vertex $v_{s_i}$.

We consider the case that the \textit{reward function}, $g(\mathcal{P}): \mathcal{T} \rightarrow \mathbb{R}_{+}$, of a single robot is a monotone submodular function. We also study the special version where the function is modular (i.e., the reward of a path is the sum of rewards of the vertices along the path). 

Let $\mathcal{S}$ be some collection of $N$ paths corresponding to the $N$ robots in the team, $\mathcal{S}=\{\mathcal{P}_1,\mathcal{P}_2,\ldots,\mathcal{P}_N\}$. The team reward collected by any subset $\mathcal{S'}\subseteq\mathcal{S}$ is given by,
\begin{equation}
f(\mathcal{S'})=g\left(\bigcup_{\mathcal{P}_i\in \mathcal{S'}}\mathcal{P}_i\right).
\label{eqn:teamreward}
\end{equation}
Note that reward function of the team $f(\mathcal{S'})$ is a submodular function irrespective of whether the single robot reward function $g(\mathcal{P}_i)$ is submodular or not. Multiple robots can visit the same vertex but only one visit is accounted for when computing the reward of the team. That is, there can be no double counting of the rewards. 


We are now ready to formally define our problem. 
\begin{problem}[Offline RMOP]\label{main_problem}
Given a metric graph $G(\mathcal{V},\mathcal{E})$, {$N$ robots with starting positions $\{v_{s_1}, v_{s_2}, \ldots, v_{s_N}\}$, budget constraint $B$}, a robot reward function $g(\mathcal{P}): \mathcal{T} \rightarrow \mathbb{R}_{+}$, and a team reward function $f$ as defined in Equation~\ref{eqn:teamreward}, the offline Robust Multiple-Path Orienteering Problem seeks to find a collection of $N$ paths, $\mathcal{S}=\{\mathcal{P}_1,\ldots,\mathcal{P}_N\}$ that are robust to the worst-case failure of $\alpha$ robots:
\begin{equation}
\begin{aligned}
    \max_{\mathcal{S} \subseteq \mathcal{T}} & \min_{{\mathcal{A}} \subseteq \mathcal{S}}f(\mathcal{S} \setminus {\mathcal{A}}) \\
    \textrm{s.t.} \qquad & |{\mathcal{A}}| \leq \alpha, 0<\alpha<N \\
           & |\mathcal{S}| = N \\
           & C(\mathcal{P}_j) \leq B.
\end{aligned}
\end{equation}
where additionally $v_{s_j}$ must be the starting vertex when constructing a path $\mathcal{P}_j$ for robot $j$.
\end{problem}

The offline RMOP is suited to model the scenarios where we need to plan paths for all the robots before they are deployed and they cannot communicate with the base station to transmit collected rewards or with each other to replan during execution. Therefore, once a robot fails during the task, the reward of the whole path of that robot will be lost, which corresponds to the set removal of paths, and the team cannot adapt to the failures of robots. One practical example for the offline RMOP is the naval mine countermeasure mission \cite{sariel2008naval}, in which a team of robots is deployed to detect undersea mine information. In such a case, reliable communication is usually not available and the robot may fail due to mines or other adversaries. Mathematically,
the offline RMOP can be interpreted as two-stage perfect information sequential one-step game, where the first player (i.e., the team of robots) chooses a set $\mathcal{S}$, and the second player (i.e., the adversary), knowing $\mathcal{S}$, chooses a subset ${\mathcal{A}}$ to remove from $\mathcal{S}$. We seek worst-case guarantees --- in practice, the adversary may not know the paths for each robot. By playing against this stronger adversary, we guarantee that the performance against a weaker one will be even better. We evaluate this empirically by considering attack models other than the worst one.

The adversarial model considered in this paper is the same as that in prior work on robust submodular optimization~\cite{tzoumas2017resilient,tzoumas2018resilient,zhou2018resilient}. However, the offline RMOP is even harder since even at the single robot level, the optimization problem we need to solve (i.e., OP) is NP-Hard. Nevertheless, we present a constant-factor approximation algorithm for this problem next.

\section{Algorithm for offline RMOP}\label{Proposed Approach}
In this section, we present the general algorithm to solve the offline RMOP (Algorithm \ref{resilient_algorithm}). The algorithm uses a generic subroutine for solving OP. In the next section, we show examples of three subroutines that can be used and show how they affect the performance of the algorithm. 

Before we describe the algorithm, we present additional notation. If $\mathcal{S'}$ is a set of $N'\leq N$ paths, then let $\mathcal{R}(\mathcal{S'})$ denote the set of corresponding $N'$ robots whose paths are contained in $\mathcal{S'}$. We use $\mathcal{A}^*(\mathcal{S})\triangleq \arg\min_{\mathcal{A}}f(\mathcal{S}\setminus{\mathcal{A}})$ to denote the worst-case set of paths that are removed from a given set of path $\mathcal{S}$. Therefore, $\mathcal{S} \setminus {\mathcal{A}}^{*}(\mathcal{S})$ denotes the set of paths that are not attacked from $\mathcal{S}$ with $|\mathcal{A}^{*}(\mathcal{S})| \leq \alpha$. 

\begin{algorithm}[ht]\label{resilient_algorithm}
    \caption{Algorithm for Problem \ref{main_problem}}
    \SetKwInOut{Input}{Input}
    \SetKwInOut{Output}{Output}
    \Input{
    Per problem \ref{main_problem} requires following inputs:
    \begin{itemize}
        \item set of robots $\mathcal{R} = \{1,\ldots,N\}$
        \item metric graph $G$
        \item starting vertices $\{v_{s_1},v_{s_2},\ldots,v_{s_N}\}$
        \item number of maximum potential attacks $\alpha$ and budget $B$
    \end{itemize}
    }
    \Output{Set $\mathcal{S}$ of paths for each robot}
    $\mathcal{S}_1 \gets \emptyset,\mathcal{S}_2 \gets \emptyset, \mathcal{M} \gets \emptyset$  \\
    \For{$i \gets 1$ \KwTo $N$}{
    $\mathcal{P}_i \gets OP(G, v_{s_i}, B)$\\
    $\mathcal{M} \gets \mathcal{M} \cup \{\mathcal{P}_i\}$\\
    }
    \textit{flag} $\leftarrow$ True\\
    \While{flag}{
    Sort elements in $\mathcal{M}$ such that $\Tilde{\mathcal{M}}=\{\mathcal{P}_1^{\prime},\mathcal{P}_2^{\prime},\ldots,\mathcal{P}_N^{\prime}\}$ and $f(\{\mathcal{P}_1^{\prime}\}) \geq f(\{\mathcal{P}_2^{\prime}\}) \geq \ldots \geq f(\{\mathcal{P}_N^{\prime}\})$\\
    $\mathcal{S}_1 \gets \{\mathcal{P}_1^{\prime},\mathcal{P}_2^{\prime},\ldots,\mathcal{P}_{\alpha}^{\prime}\}$\\ 
    //extract starting positions for the rest of robots
    $\Tilde{v}_s \gets \{v_{s_j} | \forall j \in \mathcal{R} \setminus \mathcal{R}(\mathcal{S}_1)\}$\\
    //Sequential greedy assignment 
    $\mathcal{S}_2 \gets SGA(G, \Tilde{v}_s, B)$\\
    
    //while loop control \\
    \eIf{$f({\mathcal{P}_i}) \geq f({\mathcal{P}_j}), \forall \mathcal{P}_i \in \mathcal{S}_1, \mathcal{P}_j \in \mathcal{S}_2$}{
      $flag \gets$ False
    }
    {
      Find all robots $j \in \mathcal{R}(\mathcal{S}_2)$ such that \\
      $\exists i \in \mathcal{R}(\mathcal{S}_1),  f(\{\mathcal{P}_j \in \mathcal{S}_2\}) > f(\{\mathcal{P}_i \in \mathcal{S}_1\})$ \\
      Replace the path stored in $\mathcal{M}$ corresponding to robot $j$ with the better path found when constructing $\mathcal{S}_2$ 
    }
    
    }
    $\mathcal{S} \gets \mathcal{S}_1 \cup \mathcal{S}_2$
\end{algorithm}



The algorithm consists of two main steps: first, it calls a subroutine for solving OP $N$ times to compute a path for each robot independently. It then chooses $\alpha$ paths (denoted by $\mathcal{S}_1$) with the highest individual rewards without considering overlap with other robots; Second, it uses sequential greedy assignment to find paths for the rest of the robots (denoted by $\mathcal{S}_2$) by querying the OP subroutine $N-\alpha$ times. The \textbf{while} loop is used to maintain an invariant that the paths in $\mathcal{S}_1$ are always better than the paths in $\mathcal{S}_2$. 

As described earlier, there is a tradeoff between redundancy and coverage in RMOP. The two sets of paths are constructed so that $\mathcal{S}_1$ adds redundancy and $\mathcal{S}_2$ adds coverage, together yields a provably good solution for RMOP. We explain each step in Algorithm~\ref{resilient_algorithm} next.


\paragraph*{Constructing $\mathcal{S}_1$}
Each of the $\alpha$ paths in $\mathcal{S}_1$ are  better than those in $\mathcal{S}_2$. The paths in $\mathcal{S}_1$ may overlap with each other and also overlap with those in $\mathcal{S}_2$. Thus, these paths serve to add redundancy to the team.
Constructing the best $\alpha$ paths with respect to $f$ itself is NP-hard. Therefore, Algorithm \ref{resilient_algorithm}firstly solves the orienteering problem for each robot independently and stores in $\mathcal{M}$ the (approximately optimal) paths for individual robots (lines 2--5). Then Algorithm \ref{resilient_algorithm}sorts the paths in $\mathcal{M}$ based on their collected rewards (line 8) and chooses the $\alpha$ best paths to be $\mathcal{S}_1$ (line 9). 

\paragraph*{Constructing $\mathcal{S}_2$}
After finding $\mathcal{S}_1$, Algorithm \ref{resilient_algorithm} needs to find the best paths for the rest of robots $\mathcal{R} \setminus \mathcal{R}(\mathcal{S}_1)$. Unlike $\mathcal{S}_1$, here the algorithm explicitly considers overlap when finding the paths. Thus, $\mathcal{S}_2$ serves to add coverage to the solution.

However, selecting optimal paths for $\mathcal{R} \setminus \mathcal{R}(\mathcal{S}_1)$ is a multiple-path orienteering problem and is also NP-hard. Therefore, Algorithm \ref{resilient_algorithm} approximates the solution by employing the sequential greedy algorithm  (line 11). For completeness, we present the pseudocode for SGA in Algorithm~\ref{SGA_algorithm}. 


Specifically, for robots in $\mathcal{R} \setminus \mathcal{R}(\mathcal{S}_1)$, Algorithm \ref{SGA_algorithm} finds a path using an approximation algorithm for OP (line 4). Then, Algorithm \ref{SGA_algorithm} sets the reward for the vertices visited by that robot to be zero (lines 6--8). This process repeats until we find a path for all robots. Here we implicitly assume that there is at least one path for each robot satisfying the budget constraints. 

The paths in $\mathcal{S}_1\cup \mathcal{S}_2$ form the solution to RMOP. However, we also have an outer \textbf{while} loop which we explain next.

\paragraph*{Invariant} Our analysis requires the paths in $\mathcal{S}_1$ and $\mathcal{S}_2$ to have the following property: $f(\{\mathcal{P}_i\}) \geq f(\{\mathcal{P}_j\}), \forall \mathcal{P}_i \in \mathcal{S}_1, \mathcal{P}_j \in \mathcal{S}_2$. 
This condition is trivially met if the single robot problem has to just choose the best amongst a fixed set of trajectories as in the prior work~\cite{tzoumas2017resilient,zhou2018resilient}. However, when solving RMOP, we employ a subroutine for solving OP which gives us the paths in $\mathcal{S}_1$ and $\mathcal{S}_2$. Since the subroutine uses an approximation algorithm for OP instead of an exact optimal one, we cannot guarantee that this invariant holds. For example, if the subroutine uses randomness, then running the same algorithm twice may give different results. In any case, all we can guarantee is that the paths found by the subroutine will be no more than a constant from the optimal.

We fix this problem by utilizing a \textbf{while} loop ({lines 7--20}). When the condition of the \textbf{while} loop holds (lines 13--15), the loop flag is set to be false and the while loop terminates. Otherwise (lines 16--19),  Algorithm \ref{resilient_algorithm} will find those robots that violate the above inequality and update their corresponding paths in the set $\mathcal{M}$. Recall that $\mathcal{M}$ is used to store the best path corresponding to each robot. Then, while loop will restart to construct $\mathcal{S}_1$ using the updated $\mathcal{M}$ and $\mathcal{S}_2$ for the remaining robots, again. We show that this loop will eventually terminate.

\begin{corollary}\label{corollary: while_loop}
The \textbf{while} loop in Algorithm \ref{resilient_algorithm} will terminate in a finite number of steps.
\end{corollary}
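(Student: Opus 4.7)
The plan is to exhibit a strictly increasing, bounded potential function on $\mathcal{M}$, so that the \textbf{while} loop can execute only finitely many times. Concretely, I would set
\[
\Phi(\mathcal{M}) \;=\; \sum_{i=1}^{N} f\bigl(\{\mathcal{M}[i]\}\bigr),
\]
where $\mathcal{M}[i]$ denotes the path currently associated with robot $i$ in $\mathcal{M}$. I would then show (i) that $\Phi$ strictly increases in every iteration of the \textbf{while} loop that does not terminate, and (ii) that $\Phi$ takes values in a finite set determined solely by $G$ and $B$.

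For the upper bound, note that a path in $G$ is an ordered sequence of distinct vertices in $\mathcal{V}$, and only those of cost at most $B$ are eligible. Since $\mathcal{V}$ is finite, the set of feasible paths is finite, so the set of attainable values of $f(\{\mathcal{P}\})$ is also finite; call its maximum $f_{\max}$. Thus $\Phi$ is bounded above by $N f_{\max}$ and, more importantly, ranges over a finite subset of $\mathbb{R}_{+}$.

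For the strict increase, I would trace what can change in $\mathcal{M}$ across one non-terminating pass of the loop. The only modification to $\mathcal{M}$ occurs in the \textbf{else} branch, where for each flagged robot $j \in \mathcal{R}(\mathcal{S}_2)$ the stored path is overwritten by the SGA-constructed path $\mathcal{P}_j^{\mathrm{SGA}}$ satisfying $f(\{\mathcal{P}_j^{\mathrm{SGA}}\}) > f(\{\mathcal{P}_i\})$ for some $\mathcal{P}_i \in \mathcal{S}_1$. Because $\mathcal{S}_1$ is chosen as the $\alpha$ paths in $\mathcal{M}$ with the largest single-path reward, every robot $j \in \mathcal{R} \setminus \mathcal{R}(\mathcal{S}_1)$ must satisfy $f(\{\mathcal{M}[j]\}) \le f(\{\mathcal{P}_i\})$ for every $\mathcal{P}_i \in \mathcal{S}_1$. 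Chaining the two inequalities gives $f(\{\mathcal{P}_j^{\mathrm{SGA}}\}) > f(\{\mathcal{M}[j]\})$, so the replacement strictly increases $\Phi$. Combined with the upper bound and the fact that $\Phi$ attains only finitely many values, this forces termination in finitely many iterations.

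The main obstacle is a subtle interpretive one rather than a computational one: the SGA subroutine modifies the vertex rewards as it constructs paths for successive robots, so a priori the path it produces for robot $j$ might not seem comparable to the single-path reward $f(\{\mathcal{P}_i\})$ used in the loop condition. I would make explicit that the comparisons in both the \textbf{if} condition and the update step are applied to the original $f$ evaluated on each individual path, not to any residual reward under SGA's in-place modifications. Once this point is pinned down, the potential argument above carries through unchanged and yields a concrete finite bound on the number of iterations.
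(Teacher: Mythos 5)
Your proof is correct and follows essentially the same route as the paper's: both argue that a non-terminating iteration forces some robot $j \notin \mathcal{R}(\mathcal{S}_1)$ to have its stored path in $\mathcal{M}$ replaced by a strictly better one (via the chain ``new path beats some path in $\mathcal{S}_1$, which in turn is at least as good as $j$'s old path''), and then conclude that rewards cannot increase forever. Your version is in fact slightly tighter where it matters: the paper only invokes an upper bound on collectible reward, whereas you correctly observe that the set of feasible paths --- and hence of attainable values of $\Phi$ --- is finite, which is what actually rules out an infinite strictly increasing sequence.
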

\begin{proof}
If the flag is not set to false after an iteration of the \textbf{while} loop, then it must mean that at least one new path found when constructing $\mathcal{S}_2$, say for robot $j$, is better than some path in $\mathcal{S}_1$. Suppose this better path is $\mathcal{P}_j^{'}$. Note that the set $\mathcal{M}$ includes a path for the robot $j$, say $\mathcal{P}_j$. Since $\mathcal{S}_1$ consists of the best $\alpha$ paths in $\mathcal{M}$ and $\mathcal{P}_j \notin \mathcal{S}_{1}$, then it must mean that the path $\mathcal{P}_j^{'}$ is strictly better than $\mathcal{P}_j$. Thus, after every iteration of the \textbf{while} loop, if the flag is not set, then at least one path in $\mathcal{M}$ has improved. For each robot given a fixed budget, there is a maximum amount of reward that it can collect. We cannot keep increasing the rewards of paths in  $\mathcal{M}$. Therefore, the while loop must terminate after finite iterations.  
\end{proof}
\begin{algorithm}[ht]\label{SGA_algorithm}
    \caption{Sequential Greedy Assignment}
    \SetKwInOut{Input}{Input}
    \SetKwInOut{Output}{Output}
    \SetKwProg{Fn}{Function}{:}{}
    \SetKwFunction{SGA}{SGA}
    \Fn{\SGA{$G, v_{s}, B$}}{ 
    \Input{\begin{itemize}
        \item A graph $G$ representing environment
        \item Budget $B$ for each robot 
        \item Starting positions $v_s$
    \end{itemize}}
    \Output{a collection $\mathcal{A}$ of paths }
    $\mathcal{A} \gets \emptyset, G^{\prime} \gets G, N \gets length(v_s)$  \\ 
    \For{$j \gets 1$ \KwTo $N$ }{
    $\mathcal{P}_j \gets OP(G^{\prime}, v_{s_j}, B)$\;
    $\mathcal{A} \gets \mathcal{A} \cup \{\mathcal{P}_j\}$ \\
    \ForEach{$v \in \mathcal{P}_{j}$}{
     Set the reward of $v \in G^{\prime}$ to be zero \\
    }
    }
    \textbf{return} $\mathcal{A}$
    }
    \textbf{end}
\end{algorithm}
\begin{remark}\label{S1greaterS2}
In practice, the loop in Algorithm \ref{resilient_algorithm} typically terminates after just one iteration. Paths in $\mathcal{S}_1$ are found without considering overlap. On the other hand, when solving SGA the robots find their paths by taking into account overlap with the previously found paths. The conditions in the latter are a subset of the former. Furthermore, none of the three subroutines that we employ for OP include any randomness. Therefore, it is unlikely that the paths in $\mathcal{S}_2$ will be better than that in $\mathcal{S}_1$. As such, it is less likely that the \textbf{while} loop will take more than one iteration. Nevertheless, we give the full algorithm for completeness.

\end{remark}

So far, we have not discussed the subroutine used to solve OP. In the Sec. \ref{Algorithm Analysis}, we present the analysis of the algorithm and then present the three subroutines.

\section{{Online RMOP}}\label{RMOP_with_communication}
In the offline RMOP, we consider the scenario in which robots cannot communicate with each other or the base station when they execute tasks. As a result, once a robot fails or is attacked we will lose all the rewards collected by that robot i.e., lose the path and we correspondingly plan for the worst-case attacks. In the online RMOP, we consider the scenario where robots can communicate with the base station to send collected rewards during execution and with teammates to replan in response to the attacks.  Attackers' behaviors are assumed to be the same: there are $\alpha$ attacks in total and the attacks can happen at any nodes in the map. But the difference is that for the offline case, it doesn't matter when the robot fails because as long as it fails we will lose the whole path. For the online case, when the robot fails will influence how much reward robots can collect. For example, if a robot is attacked at the very first node, we can get only get the reward of that particular node; but if the robot is attacked when it almost uses up the budget, we can collect most reward along its path. Therefore, the attacker's behaviors are characterized by two types of decision variables: when to attack and which to attack. Without loss of generality, we assume that it takes one unit of time (not necessarily the same amount of budget) to traverse one edge of the graph such that robots are able to replan synchronously. Such a graph can be obtained by carefully designing motion primitives and discretizing the environment or by adding some virtual nodes on the edges of a graph. With this assumption, the online RMOP can be formulated as follows.

\begin{definition}[Attacker behavior set]
 An attack behavior set $\mathcal{A}=\{(t_1, r_1), \ldots, (t_{\beta}, r_{\beta})\}$ is a set of tuples, each of which consists of two elements: the first element indicates when to attack and the second element indicates which robot to attack.
\end{definition}
\begin{problem}[Online RMOP]\label{online_problem}
Given a metric graph $G(\mathcal{V},\mathcal{E})$, {$N$ robots with starting positions $\{v_{0}^1, v_{0}^2, \ldots, v_{0}^N\}$, budget constraint $B$}, a robot path reward function $g(\mathcal{P}): \mathcal{T} \rightarrow \mathbb{R}_{+}$, the online Robust Multiple-Path Orienteering Problem seeks to find a collection of $N$ paths 
\begin{equation*}
        \mathcal{S}=\left\{
        \begin{array}{l}
        \mathcal{P}_1=\{v_{0}^1, ~\ldots, v_{m}^1, \ldots, ~v_{\scaleto{end}{3pt}}^1\}\\
        \vdots \\
        \mathcal{P}_N=\{v_{0}^N, ~\ldots, v_{m}^N, \ldots, ~v_{\scaleto{end}{3pt}}^N\}
        \end{array}
        \right\}
\end{equation*}
that are robust to the worst-case failure of $\alpha$ robots:
\begin{equation}
\begin{aligned}
    \max_{\mathcal{S} \subseteq \mathcal{T}} & \min_{{\mathcal{A}=\{(t_j, r_j)\}} }~g(\bigcup_{i=1}^{N}\mathcal{P}_i \setminus \bigcup_{j=1}^{|{\mathcal{A}}|} \mathcal{P}_{r_j}[t_{j}+1 : end]) \\
    \textrm{s.t.} \qquad & |{\mathcal{A}}| \leq \alpha, 0<\alpha<N \\
           & |\mathcal{S}| = N \\
           & C(\mathcal{P}_j) \leq B,
\end{aligned}
\end{equation}
where $\mathcal{A}=\{(t_1, r_1), \ldots, (t_{|{\mathcal{A}}|}, r_{|{\mathcal{A}}|})\}$ is an attacker behavior set; $\mathcal{P}_{r_j}[t_{j}+1 : end]$ is the path segment of robot $r_j$'s path $\mathcal{P}_{r_j}$ from the node $\mathcal{P}_{r_j}(t_{j}+1)$ to the node $\mathcal{P}_{r_j}(end)$ ; additionally $v_{0}^j$ must be the starting vertex when constructing a path $\mathcal{P}_j$ for robot $j$.
\end{problem}

Intuitively, in Problem \ref{online_problem}, we want to find paths for robots while $\alpha$ failures in total can happen at any nodes along the paths. If a robot fails at a particular node $\mathcal{P}(t)$, then it cannot collect reward after that node anymore. We model this problem as a discrete, sequential, two-player zero-sum game between the attackers and the robots. Since robots can communicate with each other, we aim to find one adaptive planning strategy that can adapt to the attacks. 

\begin{prop}
It's not the optimal strategy for attackers to always launch all attacks at the very first step.
\end{prop}
\begin{proof}
One example is given in Fig. \ref{fig:rational_attacker}.
\end{proof}
It should be noted that a rational attacker will not always launch attacks at the first step. For example, in Fig. \ref{fig:rational_attacker}, there are four robots that are initially located in node $a$ and each of them has a budget of $B=4$. There are $\alpha=2$ attackers in the environment. If the attackers attack two robots at the first step. The survived two robots can certainly collect 35 reward in total by planning path $(a,b,c,d,e)$ and $(a,b,c,f,g)$. By contrast, if the attackers choose to wait until they know how robots move after node $c$, the robots can collect at most 25 rewards. Here is the explanation. After four robots reach node $c$ and remain all survived, the best strategy for robots is to send three of them to follow the path $(c,d,e)$ and another robot to follow $(c,f,g)$ considering that there are still $\alpha=2$ attacks. In the worst-case where the robot following $(c,f,g)$ and one robot following $(c,d,e)$  got attacked, robots can collect 25 reward in total. If robots don't adopt the best strategy, they will get less than 25 rewards in the worst case. In the next section, we demonstrate how to find the optimal solution for this game using two-player MCTS.   

\begin{figure}[htbp]
\centerline{\includegraphics[scale=0.9]{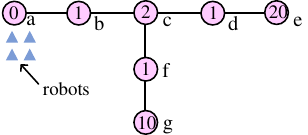}}
\caption{One example to show that the rational attackers will not always launch attacks at the first step. Four robots start from node $a$ and each has a budget of 4. There are $\alpha =2$ attacks in the environment. If attackers launch all attacks at the first step, the robots can collect 35 rewards in total by planning path $(a,b,c,d,e)$ and $(a,b,c,f,g)$ while robots can collect at most 25  if the attackers choose to wait until they know how robots move after node $c$.
}
\label{fig:rational_attacker}
\end{figure}
\section{MCTS for online RMOP}\label{section:mcts}
MCTS is an approach for finding optimal actions by randomly selecting samples from search space and incrementally building the search tree \cite{browne2012survey} and is widely applied in robotics applications, including scouting \cite{zhang2021game}, active parameter estimation \cite{slade2017simultaneous}, environment monitoring \cite{marchant2014sequential}, and multi-robot active perception \cite{best2019dec}. As shown in Fig. \ref{fig:MCTS}, there are four basic steps in each iteration of an MCTS process: selection, expansion, simulation, and backpropagation \cite{chaslot2008monte}.  

In the paper, we model the Problem \ref{online_problem} as a sequential, discrete two-player game and we adopt the MCTS algorithm to solve the game in an online fashion. At each step, attackers use their strategies to take one action first, and then robots take one action. Intuitively, it means that attackers observe the states of robots to decide to attack or not, and then robots respond to that. To choose one action, the robots will incrementally grow the search tree with some computational budget and then select one action to take from the root of the search tree based on the average reward of each action. Such a process continues until all robots run all of the budget. 

When our algorithm grows the search tree, the state of each robot is a tuple $s=(v, I)$ where $v \in \mathcal{V}$ represent the current position of the robot and $I$ is an indicator on whether the robot has been attacked ($I=1$) or not ($I=0$). The joint state of the team is the product of the individual states of robots and is stored in each node of the search tree. Robots and attackers alternate turns in growing the tree. When it's robots' turn, they will decide the transition of the positions while the attackers can decide the value of the indicator state in attackers' turn. Once one indicator state is set to be one, which means a robot is attacked, it will remain to be one for the rest of the game and that robot cannot move anymore i.e, the only action that robot can take in the game is to stay there. Similarly, if a robot has run out of budget, the only action available is to stay at the current position.  It should be noted that we present Algorithm \ref{MCTS_algorithm} from the perspective of robots but attackers can also use similar strategies. In the following, we refer to two-player MCTS (if there are two alternating turns in the search) as MCTS with adversaries and one-player MCTS without considering opponents as naive MCTS. Details of the Algorithm \ref{MCTS_algorithm} are given below.

\begin{enumerate}
    \item \textit{Selection} (Line 4 in Algorithm \ref{MCTS_algorithm}; Line 1-11 in Algorithm \ref{MCTS_subroutine}): Starting from the root node, a selection procedure is recursively applied until some leaf node is reached. In each recursion, a child node is selected based on Upper Confidence Bound for Trees (UCT) Kocsis and Szepesvári \cite{kocsis2006bandit}. There are two parts to the UCT value: exploitation and exploration. The exploitation part corresponds to the average rollout reward obtained and the exploration part is decided by the number of times that the node has been visited ($n(v^{\prime})$) and the number of times that the current node's parent has been visited ($n_p$). If a node is less visited, the exploration value will increase which encourages the selection of that node. It should be noted that if it's robots' turn robots will select the node with the highest UCT value (Line 6) while the attacker will select the node with the lowest UCT value if it's the attacker's turn (Line 8).  
    \item \textit{Expansion} (Line 5 in Algorithm \ref{MCTS_algorithm}; Line 12-19 in Algorithm \ref{MCTS_subroutine}): One (or more) child nodes are added to the tree based on the available actions. If the node has reached the terminal level, e.g., run out of budget, the current node will be returned (Line 13-15). Otherwise, add all children of the current node to the tree and return the first child node (Line 15-18).
    \item \textit{Simulation} (Line 6 in Algorithm \ref{MCTS_algorithm}; Line 20-29 in Algorithm \ref{MCTS_subroutine}): A rollout is conducted from the chosen node using the default policy until some terminal condition is met. The obtained reward will be returned.
    \item \textit{Backpropagation} (Line 7 in Algorithm \ref{MCTS_algorithm}; Line 30-34 in Algorithm \ref{MCTS_subroutine}): The simulation result is propagated back to the root and update node information along the propagation path. 
\end{enumerate}

\begin{figure}[htbp]
\centerline{\includegraphics[scale=0.9]{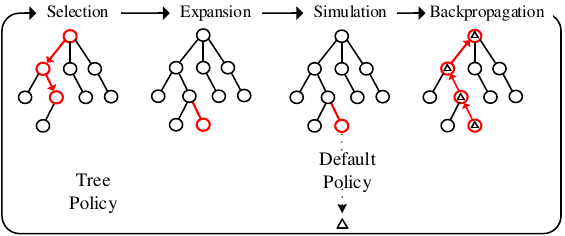}}
\caption{One iteration of the general MCTS approach \cite{browne2012survey}.
}
\label{fig:MCTS}
\end{figure}

\begin{algorithm}[ht]\label{MCTS_algorithm}
    \caption{Monte Carlo Tree Search}
    \SetKwInOut{Input}{Input}
    \SetKwInOut{Output}{Output}
    \SetKwProg{Fn}{Function}{:}{}
    \SetKwFunction{FMain}{MCTS}
    \Fn{ \FMain{$s_1 , s_2 , \ldots , s_N$} }{
    \Input{
    Initial states of robots, which includes information on attacks.
    }
    \Output{A search tree}
    Create a \textit{tree} with root node $v^t_0$ with initial states  $(s_1 , \ldots , s_N)$\\ 
    \While{computational budget not used up}{
    \nonl // selection \\
    $v_{sel}^t \gets ~\text{Selection}(tree, ~v^t_0)$ \\
    \nonl // expansion \\
    $v_{exp}^t \gets ~\text{Expansion}(tree, ~v^t_{sel})$ \\
    \nonl // rollout \\
    $Reward \gets ~\text{Simulation}(tree, ~v^t_{exp})$ \\
    \nonl // backpropagation \\
    $\text{Backpropagation}(tree, ~Reward, ~v_{exp}^t)$
    }
    \textbf{return} $tree$
    }
    \textbf{end} \\
\end{algorithm}

\begin{algorithm}[ht]\label{MCTS_subroutine}
    \caption{Monte Carlo Tree Search Subroutines}
    \SetKwFunction{Selection}{Selection}
    \SetKwProg{Fn}{Function}{:}{}
    \Fn{\Selection{$tree, ~v$}}{
    \If{level($v$) = TERMINAL}{
    \textbf{return} $v$
    } 
    \eIf{turn($v$)~=~ROBOT}{
    \nonl // $n_p$ is the number of times that the parent of $v$ has been visited \\
        $$v \gets \argmax_{v^{\prime} \in \text{children}(v)} ~ \frac{Q(v^{\prime})}{n(v^{\prime})} + c\sqrt{\frac{2\ln{n_p}}{n(v^{\prime})}}$$ 
    }
    {    
    $$v \gets \argmin_{v^{\prime} \in \text{children}(v)} ~ \frac{Q(v^{\prime})}{n(v^{\prime})} - c\sqrt{\frac{2\ln{n_p}}{n(v^{\prime})}}$$ 
    }
    \textbf{return} \Selection{$tree, ~v$}
    }
    \textbf{end} \\
    \nonl ~ \\
    \SetKwFunction{Expansion}{Expansion}
    \Fn{\Expansion{$tree, ~v$}}{
    \eIf{level($v$) = TERMINAL}{
    return $v$
    }{
    Add all child nodes of $v$ to $Tree$ \\
    \textbf{return} the first child node
    }
    }
    \textbf{end} \\
    \nonl ~ \\
    \SetKwFunction{Simulation}{Simulation}
    \Fn{\Simulation{$tree, ~v$}}{
    \While{level($v$) $\neq$ TERMINAL}{
    \eIf{turn($v$)=ROBOT}{
    $v \gets$ RobotDefaultPolicy($v$)
    }{
    $v \gets$ AttackerDefaultPolicy($v$)
    }
    }
    \textbf{return} CollectReward
    }
    \textbf{end} \\
    \nonl ~ \\
    \SetKwFunction{Backpropagation}{Backpropagation}
    \Fn{\Backpropagation{$tree, ~Reward,~v$}}{
    \While{$v \neq $ NULL}{
    \nonl // update total reward value
    $tree.v.Q \gets tree.v.Q + Reward$ \\ 
    $tree.v.n \gets tree.v.n + 1$ \\
    
    }
    }
    \textbf{end}
\end{algorithm}

\section{Performance Analysis}\label{Algorithm Analysis}
In this section, we quantify the performance of the proposed Algorithm \ref{resilient_algorithm}. We first present a new analysis for the Sequential Greedy Assignment (SGA) and then show the performance bound for our algorithm. The performance is based on the notion of curvature of the set functions.

\begin{definition}[Curvature]\label{curvature}
 Consider a finite ground set $\mathcal{V}$ and a monotone submodular set function $h:2^{\mathcal{V}} \mapsto \mathbb{R}$. The curvature of $h$ is defined as,
 \begin{equation}
     k_h = 1-\min_{v \in \mathcal{V}}\frac{h(\mathcal{V})-h(\mathcal{V} \setminus v)}{h(v)}
 \end{equation}
 \end{definition}
 The curvature takes values $0 \leq k_h \leq 1$ and measures how far $h$ is from modularity. When $k_h=0$, $h$ is modular since for all $v \in \mathcal{V}$, we get $h(\mathcal{V})-h(\mathcal{V}\setminus \{v\})=h(v)$. On the other extreme, when $k_h=1$ there exists some element $v$ that makes no unique contribution to the rest of the set, since we get $h(\mathcal{V})=h(\mathcal{V}\setminus \{v\})$. We assume that the curvature $k_g$ of the reward function and that $k_f$ of objective function is strictly less than 1. This is reasonable since it implies every vertex and path in the environment makes some non-zero  unique contribution over the rest.

We first analyze the SGA and then use that analysis for proving the performance bound of our algorithm.
\subsection{Sequential Greedy Assignment}

SGA was first proposed in \cite{singh2009efficient} to solve the MOP. Note that the MOP is the same as RMOP if we consider $\alpha=0$. SGA solves the problem by finding the path for the $i^{th}$ robot in the $i^{th}$ iteration, by considering the paths found in the previous $i-1$ iterations.  

Let $\mathcal{Q}^{*}=\{\mathcal{Q}_1^{*}, \mathcal{Q}_2^{*}, \ldots, \mathcal{Q}_M^{*}\}$ be the optimal solution to the MOP with $M$ robots. It is easy to see that these paths will be non-overlapping in a metric graph. 
\begin{prop}
\label{no_overlap_assumption}
There exists an optimal solution for the MOP consisting of no overlapping paths, i.e., $\mathcal{Q}_{i}^{*} \cap \mathcal{Q}_{j}^{*}=\emptyset,  \forall i \neq j$.
\end{prop}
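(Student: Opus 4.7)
The plan is to start from an arbitrary optimal solution and iteratively apply a shortcutting operation to remove shared vertices, relying on two ingredients: the triangle inequality in the metric graph, and the fact that the team reward depends only on the union $\bigcup_k \mathcal{Q}_k$ of visited vertices (so a vertex that remains covered by some path can be safely deleted from another path without changing the objective).

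Let $\mathcal{Q}^{*} = \{\mathcal{Q}_1^{*}, \ldots, \mathcal{Q}_M^{*}\}$ be any optimal solution, and assume the starting vertices $v_{s_1}, \ldots, v_{s_M}$ are distinct (this is the standard rooted-MOP assumption used in \cite{blum2003approximation}). If the paths are already pairwise vertex-disjoint, we are done. Otherwise, pick any vertex $v$ lying in two paths $\mathcal{Q}_i^{*}$ and $\mathcal{Q}_j^{*}$. Since $v_{s_i} \neq v_{s_j}$, at least one of the two paths contains $v$ as an interior (non-start) vertex; without loss of generality, say this is $\mathcal{Q}_j^{*}$. Form $\tilde{\mathcal{Q}}_j^{*}$ by deleting $v$ from $\mathcal{Q}_j^{*}$ and directly connecting its predecessor and successor along the path (if $v$ is the terminal vertex, simply truncate). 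By the triangle inequality, $C(\tilde{\mathcal{Q}}_j^{*}) \leq C(\mathcal{Q}_j^{*}) \leq B$, so feasibility is preserved. Because $v$ still belongs to $\mathcal{Q}_i^{*}$, the union $\bigcup_k \mathcal{Q}_k^{*}$ is unchanged, and hence $g\!\left(\bigcup_k \mathcal{Q}_k^{*}\right)$ remains at the optimum value.

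To argue termination, consider the nonnegative integer potential
\begin{equation}
\Phi(\mathcal{Q}^{*}) \;=\; \sum_{v \in \mathcal{V}} \max\bigl(0,\; \#\{k : v \in \mathcal{Q}_k^{*}\} - 1\bigr).
\end{equation}
Each shortcut strictly decreases $\Phi$ by at least one while never creating a new shared vertex, so after finitely many operations we arrive at a feasible, optimal collection of paths with $\Phi = 0$, i.e., $\mathcal{Q}_i^{*} \cap \mathcal{Q}_j^{*} = \emptyset$ for all $i \neq j$.

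The only delicate point is the bookkeeping around starting vertices: we must always shortcut the path in which the shared vertex is \emph{interior}, so that the root constraint is not violated. Under distinct starting vertices any overlapping vertex is interior to at least one of the two paths involved, which lets the induction proceed cleanly; the rest of the argument is a routine application of the triangle inequality and the coverage-style structure of the objective.
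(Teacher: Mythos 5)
Your proof is correct and follows essentially the same route as the paper's: shortcut a shared vertex out of one path, use the triangle inequality of the metric graph to preserve the budget, and observe that the objective depends only on the union of visited vertices so the optimum is unchanged. Your version is in fact more complete than the paper's sketch, since you explicitly handle the rooted/start-vertex constraint (which forces the assumption of distinct starting vertices) and supply a potential-function argument for termination, both of which the paper leaves implicit.
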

The proof is given in the supplementary document.

We use an approximation algorithm to find the path for each robot. Let $\mathcal{P}=\{\mathcal{P}_1, \mathcal{P}_{2},\ldots,\mathcal{P}_M\}$ be the set of paths returned by SGA. Note that SGA runs for $M$ iterations. Let $\mathcal{X}_{1:j}$ denote the set $\{\mathcal{P}_1, \mathcal{P}_2, \ldots,  \mathcal{P}_{j}\}$, which is the collection of paths returned by SGA after the first $j$ iterations. We have $\mathcal{X}_{1:0}=\emptyset$. 
With slight abuse of notation, we use $\mathcal{X}_{1:j}$ to refer to the (unordered set of) vertices visited by the paths in $\mathcal{X}_{1:j}$.

{
Every iteration of SGA requires solving an NP-Hard problem (c.f. line 4 in Algorithm \ref{SGA_algorithm}). Let $\mathcal{P}_j^{*}$ be the optimal solution for the problem in iteration $j$, i.e.,
$$\mathcal{P}_j^{*} = \argmax_{\pi_j \in \Pi_j}~f_{\mathcal{X}_{1:j-1}}(\pi_j)$$
where $\Pi_j$ is the set of all feasible paths for robot $j$ and $f_{X}(\mathcal{P}) \triangleq f(X \cup \mathcal{P})-f(X)$. Let $\mathcal{O}_j$ be the set $\{\mathcal{P}_1^{*}, \mathcal{P}_2^{*}, \ldots, \mathcal{P}_{j}^{*}\}$, which is the collection of optimal paths for the problems in iterations 1 through $j$. We also set $\mathcal{O}_0=\emptyset$. 
}

{The analysis in \cite{singh2009efficient} gives the relation between $f(\mathcal{X}_{1:M})$ and $f(\mathcal{O}_M)$ which is not necessarily be the same as $f(\mathcal{Q}^{*})$. In fact, the underlying relationship between $f(\mathcal{O}_M)$ and global optimal $f(\mathcal{Q}^{*})$ is unclear. Recall that $\mathcal{Q}^{*}$ is the optimal set of $M$ paths for MOP and $\mathcal{O}_M$ is the set of $M$ paths where the $j^{th}$ path is the optimal solution to the problem in iteration $j$ of SGA. These are not necessarily the same. One way to see this is to note that the order of the robots in SGA is arbitrary. If we shuffle the order in which the robots select their paths, then the paths found for each robot as well as the per stage optimal paths $\mathcal{O}_M$ will change but the optimal solution for MOP $\mathcal{Q}^{*}$ will still be the same.
In the following, we will directly establish the relation between $f(\mathcal{X}_{1:M})$ and $f(\mathcal{Q}^{*})$. Our proof uses ideas from \cite{blum2003approximation}.
}

For now, assume that we use an $\eta$ approximation for OP,
$$f_{\mathcal{X}_{1:j-1}}(\mathcal{P}_j) \geq \frac{1}{\eta}f_{\mathcal{X}_{1:j-1}}(\mathcal{P}_j^{*}).$$

\begin{theorem}\label{SGA_theorem}
	Algorithm \ref{SGA_algorithm} (SGA) gives a $\frac{1+\eta}{1-k_g}$ approximation for MOP, where $\eta$ is the approximation factor for OP and $k_g$ is the curvature for the reward function $g(\cdot)$.
\end{theorem}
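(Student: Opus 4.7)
The plan is to follow the style of the Blum et al.\ sequential-greedy analysis for modular MOP \cite{blum2003approximation}, directly comparing $f(\mathcal{Q}^*)$ against $f(\mathcal{X}_{1:M})$ rather than routing through the per-stage optima $\mathcal{O}_M$, and letting the two curvatures $k_g$ and $k_f$ enter in two separate buckets. By Proposition~\ref{no_overlap_assumption} I may take $\mathcal{Q}^*=\{\mathcal{Q}^*_1,\ldots,\mathcal{Q}^*_M\}$ to be pairwise vertex-disjoint, and pair $\mathcal{Q}^*_j$ with the robot whose starting vertex it uses so that $\mathcal{Q}^*_j$ is always a feasible candidate path in iteration $j$ of SGA. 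Writing $X=\bigcup_i\mathcal{P}_i$ for the vertex union of the SGA output and $Q=\bigcup_j\mathcal{Q}^*_j$, two applications of subadditivity of $g$ give
\[
f(\mathcal{Q}^*)=g(Q)\;\leq\;\sum_j g(\mathcal{Q}^*_j)\;\leq\;\sum_j g(\mathcal{Q}^*_j\cap X)+\sum_j g(\mathcal{Q}^*_j\setminus X),
\]
and I bound the overlap sum and the new-coverage sum separately.

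For the overlap sum, since the sets $\mathcal{Q}^*_j\cap X$ are pairwise disjoint (consequence of the disjointness of the $\mathcal{Q}^*_j$), summing the elementwise curvature inequality $g_A(v)\geq(1-k_g)g(v)$ along a telescoping ordering of $Q\cap X$ yields $g(Q\cap X)\geq(1-k_g)\sum_j g(\mathcal{Q}^*_j\cap X)$; combined with monotonicity $g(Q\cap X)\leq g(X)=f(\mathcal{X}_{1:M})$, this gives $\sum_j g(\mathcal{Q}^*_j\cap X)\leq\frac{1}{1-k_g}f(\mathcal{X}_{1:M})$. For the new-coverage sum I bound each term through the chain
\[
g(\mathcal{Q}^*_j\setminus X)\;\leq\;g(\mathcal{Q}^*_j)\;\leq\;\tfrac{1}{1-k_f}\,g_{\mathcal{X}_{1:j-1}}(\mathcal{Q}^*_j)\;\leq\;\tfrac{1}{1-k_f}\,g_{\mathcal{X}_{1:j-1}}(\mathcal{P}^*_j)\;\leq\;\tfrac{\eta}{1-k_f}\,g_{\mathcal{X}_{1:j-1}}(\mathcal{P}_j),
\]
where step one is monotonicity, step two is the $f$-curvature applied to the single path $\mathcal{Q}^*_j$ via the identity $f_A(\{\mathcal{P}\})=g_{\bigcup A}(\mathcal{P})$, step three uses that the per-iteration optimum $\mathcal{P}^*_j$ dominates any feasible candidate, and step four is the $\eta$-approximation of the OP oracle. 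Summing and telescoping $\sum_j g_{\mathcal{X}_{1:j-1}}(\mathcal{P}_j)=g(X)=f(\mathcal{X}_{1:M})$ yields $\sum_j g(\mathcal{Q}^*_j\setminus X)\leq\frac{\eta}{1-k_f}f(\mathcal{X}_{1:M})$, and adding the two bounds is exactly the claim.

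The step I expect to require the most care is the application of $f$-curvature to $\mathcal{Q}^*_j$: the inequality $f_A(\{\mathcal{P}\})\geq(1-k_f)f(\{\mathcal{P}\})$ controls the marginal of a \emph{single} ground-set element of $f$, so I must state explicitly that the path ground set over which $k_f$ is defined contains every budget-feasible path (in particular each $\mathcal{Q}^*_j$ and each SGA output $\mathcal{P}_i$), and translate between path-set marginals of $f$ and vertex-set marginals of $g$ using $f_A(\{\mathcal{P}\})=g_{\bigcup A}(\mathcal{P})$. Beyond that, the argument is submodularity plus telescoping bookkeeping in the spirit of \cite{blum2003approximation}, with no additional new ideas required.
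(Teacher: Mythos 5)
Your overall architecture is the same as the paper's: make the optimal paths vertex-disjoint via Proposition~\ref{no_overlap_assumption}, use (a truncation of) $\mathcal{Q}^*_j$ as a feasible candidate in iteration $j$, split the loss into a $\tfrac{1}{1-k_g}$ overlap bucket and a $\tfrac{\eta}{1-k_f}$ coverage bucket, and telescope. The one step where you deviate is also where the argument breaks: the link $g(\mathcal{Q}^*_j)\le\tfrac{1}{1-k_f}\,g_{\mathcal{X}_{1:j-1}}(\mathcal{Q}^*_j)$. Notice the internal red flag first: because you then bound $g(\mathcal{Q}^*_j\setminus X)$ by the \emph{whole} of $g(\mathcal{Q}^*_j)$, your coverage chain alone already gives $f(\mathcal{Q}^*)\le\sum_j g(\mathcal{Q}^*_j)\le\tfrac{\eta}{1-k_f}f(\mathcal{X}_{1:M})$, which makes your overlap bucket superfluous and would ``prove'' the strictly stronger ratio $\tfrac{\eta}{1-k_f}$. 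The reason this cannot be right is that $f_{\mathcal{X}_{1:j-1}}(\{\mathcal{Q}^*_j\})\ge(1-k_f)f(\{\mathcal{Q}^*_j\})$ is the single-element curvature bound applied to a path that may be vertex-redundant with respect to $\mathcal{X}_{1:j-1}$: if some earlier $\mathcal{P}_i$ already covers all vertices of $\mathcal{Q}^*_j$, the left-hand side is $0$ while $g(\mathcal{Q}^*_j)>0$, so the inequality holds only if $k_f=1$, i.e.\ precisely when the bound is vacuous and the standing assumption $k_f<1$ fails. The paper sidesteps this by first deleting the overlap $\Delta_j=\mathcal{Q}^*_j\cap\mathcal{X}_{1:j-1}$ and invoking the curvature inequality~\eqref{key_inequality} only on the vertex-disjoint remainder $\mathcal{Q}^*_j\setminus\Delta_j$, for which the correction term $\sum_{v\in\mathcal{Y}\cap\mathcal{Z}}f_{\mathcal{Z}\cup\mathcal{Y}\setminus\{v\}}(v)$ vanishes identically; the discarded $g(\Delta_j)$ is exactly what the $\tfrac{1}{1-k_g}$ bucket must then absorb, which is why that term appears in the theorem at all. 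To repair your proof, replace the step $g(\mathcal{Q}^*_j\setminus X)\le g(\mathcal{Q}^*_j)$ by $g(\mathcal{Q}^*_j\setminus\Delta_j)\ge g(\mathcal{Q}^*_j)-g(\Delta_j)$, run the curvature and $\eta$-approximation steps on the truncated path $\mathcal{Q}^*_j\setminus\Delta_j$ (still feasible in a metric graph), and keep your overlap bookkeeping with $\Delta_j$ in place of $\mathcal{Q}^*_j\cap X$; the disjointness of the $\Delta_j$, the $k_g$-telescoping, and $g(\Delta)\le g(\mathcal{X}_{1:M})$ then go through exactly as in the paper.
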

\begin{proof}
Let $\Delta_{i}\triangleq\mathcal{X}_{1:i-1} \cap \mathcal{Q}_{i}^{*}$ be the set of vertices visited by both the optimal path for the robot $i$ and paths found using SGA for robots 1 through $i-1$. Let $\Delta\triangleq\bigcup_{i}\Delta_{i}$. When we construct a path for robot $i$ in the $i$-th iteration, there is a feasible path for robot $i$ that visits all vertices in $\mathcal{Q}_{i}^{*}\setminus\Delta_{i}$. That is, if we remove the vertices from the optimal path $\mathcal{Q}_{i}^{*}$ for robot $i$ that are also in $\mathcal{X}_{1:i-1}$, the remaining vertices in $\mathcal{Q}_{i}^{*}$ still form a feasible path for robot $i$ (since it cannot be longer). 
Therefore, the optimal path $\mathcal{P}_{i}^{*}$ in the iteration $i$ for robot $i$ should satisfy 
\begin{equation}\label{SGA_step1}
\begin{split}
      f_{\mathcal{X}_{1:i-1}}(\{\mathcal{P}_{i}^{*}\}) &
     \geq f_{\mathcal{X}_{1:i-1}}(\{\mathcal{Q}_{i}^{*} \setminus \Delta_{i}\})\\
     &= f(\mathcal{X}_{1:i-1} \cup \{(\mathcal{Q}_{i}^{*} \setminus \Delta_{i})\})-f(\mathcal{X}_{1:i-1}) \\
     &= g(\cup_{j=1}^{i-1}\mathcal{P}_{j} \cup (\mathcal{Q}_{i}^{*} \setminus \Delta_{i})) - g(\cup_{j=1}^{i-1}\mathcal{P}_{j})
\end{split}
\end{equation}
Given two sets $\mathcal{Y}, \mathcal{Z} \subseteq \Pi$, 
using the inequality presented in the footnote in \cite{sviridenko2017optimal}, we have
\begin{equation}\label{key_inequality}
    g(\mathcal{Z} \cup \mathcal{Y})-g(\mathcal{Z})+\sum_{j \in \mathcal{Y} \cap \mathcal{Z}}g_{\mathcal{Z} \cup \mathcal{Y}\setminus \{j\}}(j) \geq (1-k_g)g(\mathcal{Y}).
\end{equation}
Note that $\cup_{j=1}^{i-1}\mathcal{P}_{j} \cap \{\mathcal{Q}^{*}_{i}\setminus\Delta_{i}\}=\emptyset$. Applying inequality \eqref{key_inequality} to \eqref{SGA_step1} and using submodularity, we have 
   \begin{align}
        f_{\mathcal{X}_{1:i-1}}(\{\mathcal{P}_{i}^{*}\}) &\geq    f_{\mathcal{X}_{1:i-1}}(\{\mathcal{Q}^{*}_{i}\setminus\Delta_{i}\})\\
       &\geq   (1-k_g) g(\mathcal{Q}^{*}_{i}\setminus\Delta_{i}) \\
       &\geq   (1-k_g)(g(\mathcal{Q}_{i}^{*})- g(\Delta_{i})).
   \end{align}
 Using the $\eta$ approximation algorithm for OP, we have $$ f_{\mathcal{X}_{1:i-1}}(\{\mathcal{P}_i\}) \geq \frac{1}{\eta}(1-k_g)(g(\mathcal{Q}_{i}^{*})-g(\Delta_{i})).$$

Summing over all $i$, we get 
\begin{equation}
    \sum_{i=1}^{M} \frac{\eta}{1-k_g} f_{\mathcal{X}_{1:i-1}}(\{\mathcal{P}_{i}\}) \geq \sum_{i=1}^{M}g(\mathcal{Q}_{i}^{*})-\sum_{i=1}^{M}g(\Delta_{i}).
\end{equation}
The left hand side is equal to $\frac{\eta}{1-k_g} f(\mathcal{X}_{1:M})$ by definition. Furthermore, by submodularity,
\begin{equation}\label{RHS_first_term}
    \sum_{i=1}^{M}g(\mathcal{Q}_{i}^{*}) \geq g(\bigcup_{i=1}^{M} \mathcal{Q}_{i}^{*}).
\end{equation}
We also have, 
\begin{align}
    \begin{split}\label{eq_lemma1}
        g(\Delta) &\geq (1-k_g)\sum_{\delta \in \Delta}g(\{\delta\})
    \end{split}\\
    \begin{split}\label{eq_assumption}
        & = (1-k_g)\sum_{i=1}^{M}\sum_{\delta \in \Delta_{i}}g(\{\delta\})
    \end{split}\\
    \begin{split}\label{eq_submodular}
        & \geq (1-k_g)\sum_{i=1}^{M}g(\Delta_{i})
    \end{split}
\end{align}
where Eq. \eqref{eq_lemma1} holds due to Lemma 1 in \cite{tzoumas2017resilient}; Eq. \eqref{eq_assumption} follows from Proposition \ref{no_overlap_assumption} that states $\Delta_{i} \cap \Delta_{j}=\emptyset,  \forall i \neq j$; Eq. \eqref{eq_submodular} is due to submodularity of $g$. Rearranging the terms and using the definition of $\Delta$,
\begin{equation}\label{eq_summary}
    -\sum_{i=1}^{M}g(\Delta_{i}) \geq -\frac{1}{1-k_g}g(\bigcup_{i=1}^{M} \Delta_{i}).
\end{equation}

Since $\Delta_{i} \in \mathcal{X}_{1:i-1}$, by monotonicity we have
\begin{equation}\label{eq_monotonoe}
    g(\bigcup_{i=1}^{M} \Delta_{i}) \leq g(\bigcup_{i=1}^{M} \mathcal{X}_{1:i-1}) \leq g(\mathcal{X}_{1:M})=g(\bigcup_{i=1}^{M} \mathcal{P}_{i}).
\end{equation}
Using Eq. \eqref{RHS_first_term}, \eqref{eq_summary}, and \eqref{eq_monotonoe}, we have,
$$\frac{\eta}{1-k_g} f(\mathcal{X}_{1:M}) \geq g(\bigcup_{i=1}^{M} \mathcal{Q}_{i}^{*})-\frac{1}{1-k_g}g(\bigcup_{i=1}^{M} \mathcal{P}_{i}).$$
By definition of the objective function $f$,
\begin{align*}
    \frac{\eta}{1-k_g} f(\mathcal{X}_{1:M}) & \geq g(\bigcup_{i=1}^{M} \mathcal{Q}_{i}^{*})-\frac{1}{1-k_g}g(\bigcup_{i=1}^{M} \mathcal{P}_{i}) \\
    &=f(\mathcal{Q}^{*})-\frac{1}{1-k_g}f(\mathcal{X}_{1:M}).
\end{align*}
That is, 
$f(\mathcal{X}_{1:M}) \geq \frac{1-k_g}{1+\eta}f(\mathcal{Q}^{*})$.
\end{proof}

We now use this result in proving our main result.
\subsection{Analysis for Algorithm~\ref{resilient_algorithm}}

\begin{theorem}\label{main_theorem}
Algorithm \ref{resilient_algorithm} returns a set $\mathcal{S}$ such that 
$$f(\mathcal{S} \setminus {\mathcal{A}}^{*}(S)) \geq \frac{\max[1-k_{f},\frac{1}{\alpha+1},\frac{1}{N-\alpha}]}{\frac{1+\eta}{1-k_g}}f^{*}$$
where $\eta, k_g$ are the same as that defined in Theorem \ref{SGA_theorem}; $k_f$ is the curvature of objective function $f$; and ${\mathcal{A}}^{*}(S)$ is the optimal removal set of $\mathcal{S}$; and $f^{*}$ is the optimal solution to RMOP.
\end{theorem}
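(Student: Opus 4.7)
I would attack the bound via two complementary pieces: a reduction of $f^{*}$ to an SGA guarantee on $\mathcal{S}_{2}$, and three independent lower bounds on the surviving reward $f(\mathcal{S}\setminus\mathcal{A}^{*}(\mathcal{S}))$ in terms of $f(\mathcal{S}_{2})$, one per term of the $\max$.

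First, I would fix notation. Let $\mathcal{A}^{\star}\triangleq\mathcal{A}^{*}(\mathcal{S})$, $\mathcal{A}_{1}\triangleq\mathcal{A}^{\star}\cap\mathcal{S}_{1}$ and $\mathcal{A}_{2}\triangleq\mathcal{A}^{\star}\cap\mathcal{S}_{2}$. Because $|\mathcal{S}_{1}|=\alpha$ and $|\mathcal{A}^{\star}|\leq\alpha$, the key combinatorial inequality $|\mathcal{S}_{1}\setminus\mathcal{A}_{1}|\geq|\mathcal{A}_{2}|$ holds. Combined with the invariant enforced by the \textbf{while} loop of Algorithm~\ref{resilient_algorithm}, this gives an injective pairing between the paths in $\mathcal{A}_{2}$ (removed from $\mathcal{S}_{2}$) and distinct surviving paths in $\mathcal{S}_{1}$ whose single-path rewards are at least as large. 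This pairing is the lever for the whole argument.

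Next, I would upper bound $f^{*}$ by an MOP optimum reachable by SGA. Let $\mathcal{Q}^{\circ}$ denote the optimal MOP value for the $N-\alpha$ robots in $\mathcal{R}\setminus\mathcal{R}(\mathcal{S}_{1})$, using their assigned starting vertices and budget $B$. Since the RMOP adversary is always free to delete exactly the $\alpha$ paths of robots in $\mathcal{R}(\mathcal{S}_{1})$ from any feasible RMOP solution, the restriction of any such solution to the other $N-\alpha$ robots is feasible for MOP on those robots, so $f^{*}\leq f(\mathcal{Q}^{\circ})$. Because $\mathcal{S}_{2}$ is produced by SGA on exactly those $N-\alpha$ robots, Theorem~\ref{SGA_theorem} yields
\begin{equation*}
    f(\mathcal{S}_{2})\;\geq\;\frac{f(\mathcal{Q}^{\circ})}{\tfrac{1}{1-k_{g}}+\tfrac{\eta}{1-k_{f}}}\;\geq\;\frac{f^{*}}{\tfrac{1}{1-k_{g}}+\tfrac{\eta}{1-k_{f}}}.
\end{equation*}

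Third, I would produce three lower bounds on $f(\mathcal{S}\setminus\mathcal{A}^{\star})$ in terms of $f(\mathcal{S}_{2})$. For the $1/(N-\alpha)$ bound, subadditivity gives $f(\mathcal{S}_{2})\leq(N-\alpha)\max_{\mathcal{P}\in\mathcal{S}_{2}}f(\{\mathcal{P}\})$, and the invariant forces at least one surviving path (either in $\mathcal{S}_{1}\setminus\mathcal{A}_{1}$ or in $\mathcal{S}_{2}\setminus\mathcal{A}_{2}$) to have single-path reward at least $\max_{\mathcal{P}\in\mathcal{S}_{2}}f(\{\mathcal{P}\})$. For the $1/(\alpha+1)$ bound, I would consider the $\alpha+1$ top-ranked paths $\mathcal{S}_{1}\cup\{\mathcal{P}^{\dagger}\}$, where $\mathcal{P}^{\dagger}$ is the top path of $\mathcal{S}_{2}$; the adversary kills at most $\alpha$ of them, and subadditivity together with the invariant on this $(\alpha+1)$-element set yields $f(\mathcal{S}\setminus\mathcal{A}^{\star})\geq f(\mathcal{S}_{2})/(\alpha+1)$. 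For the $1-k_{f}$ bound, I would apply inequality~\eqref{key_inequality} to the surviving set, using the pairing from step one so that each removed $\mathcal{A}_{2}$-path's curvature contribution is absorbed by a dominant surviving $\mathcal{S}_{1}$-path. Taking the maximum of the three bounds and composing with the SGA reduction of step two produces the stated inequality.

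\textbf{Main obstacle.} The hard step is the $1-k_{f}$ bound: a direct curvature argument applied to $(\mathcal{S},\mathcal{A}^{\star})$ is too weak (it can be nearly vacuous when the adversary concentrates on uniquely high-value paths), so the argument genuinely relies on the pairing from the \textbf{while}-loop invariant together with $|\mathcal{S}_{1}\setminus\mathcal{A}_{1}|\geq|\mathcal{A}_{2}|$. Bookkeeping the marginal contributions via \eqref{key_inequality} so that the surviving $\mathcal{S}_{1}$-paths cleanly compensate for the removed $\mathcal{S}_{2}$-paths is where almost all of the technical work of the theorem lives; the other two bounds are essentially counting arguments that exploit the same invariant.
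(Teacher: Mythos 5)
Your overall architecture is the same as the paper's: you lower-bound $f(\mathcal{S}\setminus\mathcal{A}^{*}(\mathcal{S}))$ by each of $(1-k_f)f(\mathcal{S}_2)$, $\tfrac{1}{\alpha+1}f(\mathcal{S}_2)$, and $\tfrac{1}{N-\alpha}f(\mathcal{S}_2)$, then chain through Theorem~\ref{SGA_theorem} and the inequality $f(\mathcal{Q}^{*})\geq f^{*}$. One step is genuinely nicer than the paper's: for $f(\mathcal{Q}^{*})\geq f^{*}$ the paper runs a partition-matroid min--max/max--min argument (Lemma~9 of \cite{tzoumas2018resilientnon}), whereas you simply observe that the adversary facing $\mathcal{S}^{*}$ may delete exactly the paths of the robots in $\mathcal{R}(\mathcal{S}_1)$, so the surviving restriction is MOP-feasible for $\mathcal{R}\setminus\mathcal{R}(\mathcal{S}_1)$ and hence dominated by $f(\mathcal{Q}^{*})$. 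That is a correct and more elementary route to the same inequality. Your $(1-k_f)$ and $\tfrac{1}{N-\alpha}$ sketches also match the paper's supplementary proofs in substance (the former uses the pairing $|\mathcal{S}_1\setminus\mathcal{A}_1|\geq|\mathcal{A}_2|$ plus the while-loop invariant to swap surviving $\mathcal{S}_1$-paths for removed $\mathcal{S}_2$-paths term by term, exactly as in Eqs.~\eqref{V_1}--\eqref{V_6}).

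The one genuine gap is your argument for the $\tfrac{1}{\alpha+1}$ bound. As written --- ``the adversary kills at most $\alpha$ of the $\alpha+1$ top-ranked paths $\mathcal{S}_1\cup\{\mathcal{P}^{\dagger}\}$, so one survives'' --- the conclusion you can draw is $f(\mathcal{S}\setminus\mathcal{A}^{*}(\mathcal{S}))\geq\max_{\mathcal{P}\in\mathcal{S}_2}f(\{\mathcal{P}\})\geq\tfrac{1}{N-\alpha}f(\mathcal{S}_2)$, which merely reproves your third bound; nothing in that counting argument produces the denominator $\alpha+1$. The bound is nonetheless true, and you can rescue it with the pairing you already set up: by submodularity $f(\mathcal{S}_2)\leq f(\mathcal{S}_2\setminus\mathcal{A}_2)+\sum_{p\in\mathcal{A}_2}f(\{p\})$; the first term is at most $f(\mathcal{S}\setminus\mathcal{A}^{*}(\mathcal{S}))$ by monotonicity, and whenever $\mathcal{A}_2\neq\emptyset$ some path of $\mathcal{S}_1$ survives and dominates every $p\in\mathcal{A}_2$ by the invariant, so $\sum_{p\in\mathcal{A}_2}f(\{p\})\leq|\mathcal{A}_2|\,f(\mathcal{S}\setminus\mathcal{A}^{*}(\mathcal{S}))\leq\alpha f(\mathcal{S}\setminus\mathcal{A}^{*}(\mathcal{S}))$, giving $f(\mathcal{S}_2)\leq(\alpha+1)f(\mathcal{S}\setminus\mathcal{A}^{*}(\mathcal{S}))$. (The paper instead follows \cite{tzoumas2017resilient}, introducing $\xi=f_{\mathcal{S}\setminus\mathcal{A}^{*}(\mathcal{S})}(\mathcal{S}_2^{*})/f(\mathcal{S}_2)$ and balancing $1-\xi$ against $\xi/\alpha$; either repair works.) With that step fixed, your proof goes through.
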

The omitted proofs can be found in {the appendix}. {Our proof is inspired by that in \cite{tzoumas2017resilient}. The main difference is that the proof in \cite{tzoumas2017resilient} is suitable for set of single elements while our proof deals with set of paths, each of which consists of several elements. Our proof is inspired by one mathematical structure found in \cite{tzoumas2017resilient} on constructing \textit{bait set} and \textit{reward set}. Besides, we need to consider the fact that the single robot submodular OP, which is NP-hard, can only be solved approximately. } 

Now, we describe the three subroutines that can be employed for solving OP. We start with the most general case where the reward function $g$ is submodular and the budget for each robot must be strictly enforced.
\begin{corollary}
If recursive greedy algorithm~\cite{recursivegreedy2005} is used as a subroutine to solve OP and additionally each robot has a predefined terminal vertex, then $\eta$ in Theorem \ref{main_theorem} equals to $\log(OPT)$. Here $OPT$ is the reward collected by the optimal algorithm. The running time of the resulting algorithm is quasi-polynomial since the running time of recursive greedy is quasi-polynomial.
\end{corollary}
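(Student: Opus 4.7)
The plan is to treat this corollary as a direct instantiation of Theorem \ref{main_theorem} with the approximation factor $\eta$ fixed by the guarantee of a specific single-robot subroutine, so no new analytic machinery is needed. Concretely, I would point to the recursive greedy algorithm of Chekuri and Pal~\cite{recursivegreedy2005}, which, for the single-robot submodular orienteering problem with a prescribed starting vertex \emph{and} a prescribed terminal vertex and a hard budget $B$ on path length, returns a feasible path whose reward is within a factor $\log(OPT)$ of the optimum single-robot reward, where $OPT$ is the value of the unknown optimum. The assumption in the corollary that each robot has a predefined terminal vertex is exactly the hypothesis of that result, so the guarantee applies verbatim to every invocation of the $OP(\cdot)$ subroutine both inside the initial loop of Algorithm~\ref{resilient_algorithm} and inside Algorithm~\ref{SGA_algorithm}.

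The next step is to verify that this instantiated subroutine satisfies the abstract specification used in the proof of Theorem \ref{main_theorem}. In that proof, and in the proof of Theorem \ref{SGA_theorem}, the only property of the OP subroutine that is invoked is the inequality $f_{\mathcal{X}_{1:j-1}}(\mathcal{P}_j) \geq \frac{1}{\eta} f_{\mathcal{X}_{1:j-1}}(\mathcal{P}_j^*)$, i.e., the guarantee is needed on the marginalized reward function $f_{\mathcal{X}_{1:j-1}}(\cdot)$, which is itself monotone submodular whenever $g$ is. I would note that setting the reward of vertices in $\mathcal{X}_{1:j-1}$ to zero, as SGA does, produces a monotone submodular reward on the modified graph, so recursive greedy still yields a $\log(OPT)$ approximation. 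Substituting $\eta = \log(OPT)$ into the conclusion of Theorem \ref{main_theorem} gives exactly the claim.

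For the running-time portion, I would argue in two stages. Recursive greedy runs in quasi-polynomial time in the input size (this is precisely the bound proved in~\cite{recursivegreedy2005}). Algorithm \ref{resilient_algorithm} invokes it $N$ times in the initial loop and then at most $N - \alpha$ times per iteration of the \textbf{while} loop through the call to SGA; each iteration of the \textbf{while} loop that does not terminate strictly improves the stored reward of at least one path in $\mathcal{M}$, and by the preceding corollary the loop terminates in a finite number of iterations bounded by the total improvement possible. Hence the overall running time is a polynomial-in-$N$ multiple of the quasi-polynomial recursive greedy time, which is still quasi-polynomial.

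The only mildly subtle point, and the place I would be most careful, is confirming that the terminal-vertex hypothesis of recursive greedy is genuinely satisfied by how Algorithms \ref{resilient_algorithm} and \ref{SGA_algorithm} are invoked; since this is an extra modeling assumption beyond Problem \ref{main_problem}, I would explicitly flag it in the statement (as the corollary already does) rather than try to remove it. Everything else is bookkeeping on top of Theorem \ref{main_theorem}.
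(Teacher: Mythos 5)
Your proposal is correct and matches the paper's treatment: the paper states this corollary without proof, treating it exactly as you do --- a direct instantiation of Theorem~\ref{main_theorem} with $\eta$ set to the $\log(OPT)$ guarantee of recursive greedy (which requires the terminal-vertex hypothesis), plus the observation that quasi-polynomially many quasi-polynomial subroutine calls remain quasi-polynomial. Your added check that the marginalized reward $f_{\mathcal{X}_{1:j-1}}(\cdot)$ used inside SGA is still monotone submodular is a worthwhile detail the paper leaves implicit.
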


Next, consider the variant where $g$ is still submodular, but each robot is allowed to exceed its predefined budget by a bounded amount. 
\begin{corollary}\label{corollary_GCB}
If General Cost-Benefit (GCB) approximation algorithm~\cite{zhang2016submodular} is used as subroutine for OP and we are allowed to relax given budget to $\frac{\psi(n)K_c}{\beta(1+\beta(K_c-1)(1-k_c))}B$, then $\eta$ in Theorem \ref{main_theorem} equals to $2(1-e^{-1})^{-1}$. Here, $\psi(n), \beta, K_c, k_c$ as defined in \cite{zhang2016submodular}. The GCB algorithm runs in polynomial time.
\end{corollary}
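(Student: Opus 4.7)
The plan is to reduce the claim directly to Theorem \ref{main_theorem} by exhibiting GCB as a valid $\eta$-approximation subroutine for OP with $\eta = 2(1-e^{-1})^{-1}$, and then invoking the main theorem as a black box. First I would recall the guarantee proved by Zhang and Vorobeychik~\cite{zhang2016submodular}: for submodular maximization under a routing/knapsack constraint with budget $B$, the GCB algorithm produces a path whose reward is within a factor $(1-e^{-1})/2$ of the optimal reward obtainable with budget $B$, provided one is allowed to output a path whose cost is at most $\frac{\psi(n)K_c}{\beta(1+\beta(K_c-1)(1-k_c))}B$, where $\psi(n),\beta,K_c,k_c$ are the instance-dependent constants defined in~\cite{zhang2016submodular}. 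Rearranging, this is exactly an $\eta$-approximation for OP with $\eta = 2(1-e^{-1})^{-1}$ under the stated budget relaxation.

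Next I would check that GCB is a legitimate drop-in for the OP subroutine used both in the initial construction of $\mathcal{M}$ (line 3 of Algorithm~\ref{resilient_algorithm}) and inside SGA (line 4 of Algorithm~\ref{SGA_algorithm}). The analysis of SGA in Theorem~\ref{SGA_theorem} only uses the fact that, in each iteration $i$, the returned path $\mathcal{P}_i$ satisfies $f_{\mathcal{X}_{1:i-1}}(\{\mathcal{P}_i\})\ge \tfrac{1}{\eta}\,f_{\mathcal{X}_{1:i-1}}(\{\mathcal{P}_i^{*}\})$ against the best path feasible under the \emph{original} budget $B$; it does not require that $\mathcal{P}_i$ itself have cost at most $B$. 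Since GCB guarantees precisely this inequality while only overspending the budget by the specified multiplicative factor, the SGA analysis carries through verbatim, and so does the proof of Theorem~\ref{main_theorem} (which only uses SGA's guarantee via inequality~\eqref{S2_S2star} and the invariant maintained by the \textbf{while} loop).

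Having verified compatibility, I would substitute $\eta = 2(1-e^{-1})^{-1}$ into the bound of Theorem~\ref{main_theorem} to obtain the stated approximation ratio. Finally, the runtime claim follows because GCB runs in polynomial time~\cite{zhang2016submodular}, SGA invokes it $N-\alpha$ times, the initial loop invokes it $N$ times, and the outer \textbf{while} loop of Algorithm~\ref{resilient_algorithm} terminates in finitely many iterations (Corollary~1); since GCB is deterministic and SGA only shrinks rewards compared to the unconstrained OP call, Remark~\ref{S1greaterS2} indicates the loop typically terminates in one pass, preserving polynomial overall complexity.

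The main (and essentially only) obstacle I anticipate is the bookkeeping subtlety in the previous paragraph: one must be careful that GCB's budget overrun does not interact badly with the SGA argument. Concretely, the feasibility step in Theorem~\ref{SGA_theorem} that ``$\mathcal{Q}_i^{*}\setminus\Delta_i$ is a feasible path for robot $i$'' uses metricity to argue the residual path is no longer than $B$; this remains valid even when the paths actually returned by GCB use up to the relaxed budget, because the comparison path $\mathcal{Q}_i^{*}$ is measured against the original budget $B$ used by the optimum. Once this point is pinned down, the corollary is immediate.
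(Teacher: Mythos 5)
Your proposal is correct and follows the same route the paper implicitly takes: the corollary is stated without an explicit proof precisely because it amounts to reading off the GCB guarantee from~\cite{zhang2016submodular} as an $\eta$-approximation for OP with $\eta = 2(1-e^{-1})^{-1}$ under the stated budget relaxation, and substituting into Theorem~\ref{main_theorem}. Your explicit check that the budget overrun does not break the feasibility step in the proof of Theorem~\ref{SGA_theorem} (since the comparison paths $\mathcal{Q}_i^{*}$ and $\mathcal{P}_i^{*}$ are measured against the original budget $B$) is a worthwhile detail that the paper leaves unstated.
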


Finally, consider the case where $g$ is modular. Here, we get the strongest guarantee with no relaxations to RMOP.
\begin{corollary}
If the reward function $g$ is modular, then using the approximation algorithm for OP \cite{blum2003approximation} yields an $\eta=4$ in Theorem \ref{main_theorem}. The running time of the algorithm \cite{blum2003approximation} is polynomial. 
\end{corollary}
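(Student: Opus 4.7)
The plan is to observe that this corollary is essentially a direct instantiation of Theorem~\ref{main_theorem}: once we fix the subroutine used in line~3 of Algorithm~\ref{resilient_algorithm} and line~4 of Algorithm~\ref{SGA_algorithm} to be the orienteering algorithm of Blum et al.~\cite{blum2003approximation}, the only things left to check are (i) that the approximation factor that appears in the statement of Theorem~\ref{SGA_theorem} (and hence in Theorem~\ref{main_theorem}) is exactly $\eta=4$, and (ii) that the overall running time remains polynomial. Both of these will follow once we verify that Blum et al.'s guarantee actually applies to every OP instance our scheme constructs.

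First, I would recall that Blum et al.~\cite{blum2003approximation} give a polynomial-time $4$-approximation for rooted OP whenever the reward at each vertex is a non-negative scalar and the edge weights form a metric, i.e., whenever the total reward collected along a path is a modular (additive) function of the visited vertices. Under the hypothesis of the corollary, $g$ is modular, so $g$ on the original graph $G$ satisfies exactly this condition, and each of the $N$ calls in lines~2--5 of Algorithm~\ref{resilient_algorithm} produces a path with reward at least $\tfrac{1}{4}$ of the per-robot OP optimum.

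Next, the slightly less trivial step is to justify that the same $4$-approximation guarantee also holds for the subsequent calls inside SGA (Algorithm~\ref{SGA_algorithm}) and inside the \textbf{while}-loop refinements of Algorithm~\ref{resilient_algorithm}. In those calls, the reward of any vertex already visited by a previously-planned robot is set to zero. Zeroing out some entries of a modular vertex-reward vector yields another non-negative modular function on the same metric graph $G$, so Blum et al.'s algorithm still applies and still returns a $4$-approximation to the corresponding OP. Consequently, the hypothesis of Theorem~\ref{SGA_theorem}, namely $f_{\mathcal{X}_{1:j-1}}(\mathcal{P}_j) \geq \tfrac{1}{\eta} f_{\mathcal{X}_{1:j-1}}(\mathcal{P}_j^{*})$, is satisfied with $\eta = 4$ at every iteration; and the same constant feeds into Theorem~\ref{main_theorem} verbatim.

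Finally, I would conclude by substituting $\eta = 4$ into the bound of Theorem~\ref{main_theorem} and observing that the overall running time of Algorithm~\ref{resilient_algorithm} is dominated by a polynomial number of calls to Blum et al.'s polynomial-time subroutine, plus the finitely many while-loop iterations guaranteed by Corollary~1, each of which makes at most $N$ additional such calls. I do not anticipate a genuine technical obstacle here; the only subtle point worth explicitly writing out is the closure of modularity under zeroing vertex rewards, since without that remark it is not immediately obvious that the $4$-approximation transfers from the original OP instance to the instances actually solved inside SGA.
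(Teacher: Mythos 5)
Your proposal is correct and follows essentially the same route the paper takes: the corollary is stated as a direct instantiation of Theorem~\ref{main_theorem}, plugging in the polynomial-time $4$-approximation of Blum et al.~\cite{blum2003approximation} for rooted OP with modular rewards on a metric graph. Your explicit observation that zeroing out vertex rewards inside SGA preserves non-negative modularity --- so the $4$-approximation guarantee transfers to every OP instance the scheme actually constructs, and hence the hypothesis $f_{\mathcal{X}_{1:j-1}}(\mathcal{P}_j) \geq \tfrac{1}{\eta} f_{\mathcal{X}_{1:j-1}}(\mathcal{P}_j^{*})$ holds with $\eta=4$ at every iteration --- is the one detail the paper leaves implicit, and it is exactly the right point to verify.
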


\subsection{Running time}
{
MCTS is an anytime algorithm and can converge to the optimal solution as computational time increases. In applications, the running time is decided by the available computational budget. Next, we will mainly focus on the running time analysis of the proposed Algorithm 1. 
}

{
Let $t_{OP}$ be the time needed to solve a single robot OP and $t_f$ be the time to evaluate the submodular function of a robot path. Suppose that the basic operations like sorting and comparison takes one unit time. Line 2-5 involves solving OP for $N$ times and it takes $O(Nt_{OP})$. Inside the while loop, for line 8, the sorting will take $O(N \rm{lg} \textit{N})$ and evaluation of submodular function will take $O(Nt_f)$. Line 9 and 10 take constant time. SGA (line 11) will take $O((N-\alpha)t_{OP})$. Line 13-19 involves $O((N-\alpha)\alpha)$ comparisons and takes $O(Nt_f)$ to evaluate submodular functions. 
Since sorting and comparing operation is much faster than computing OP and evaluating submodular function, 
the overall running time inside the while loop will be dominated by $O(N(t_f + t_{OP}))$. Suppose that the while loop terminates after $n_w$ loops, which can be upper-bounded as follows. Let $n_i$ be the number of feasible paths for robot $i$ and $n_p = \max_i n_i$. By Corollary 1, the while loop will surely terminate when the reward of each path in $\mathcal{M}$ cannot be increased anymore. The path in $\mathcal{M}$ corresponding to robot $i$ can be improved at most $n_i$ times. As a result, the total number of while loops can be upper-bounded by 
$$n_w \leq \sum_{i=1}^N n_i \leq n_pN.$$
Therefore, the running time for the whole while loop can be upper-bounded by $O(n_p N^2 (t_{OP}+t_f))$. It should be noted that as mentioned in Remark 1, $n_w$ is usually very small in practice. Combining with the running time for line 2 - 5 ($O(Nt_{OP})$), the running time of the algorithm is $O(n_p N^2 (t_{OP}+t_f))$.
}

\section{Numerical Simulations for RMOP without Communication}\label{Simulation}
In this section, we validate the performance of Algorithm~\ref{resilient_algorithm} through numerical simulations. In particular, (1) we compare the performance of our algorithm with two baseline strategies; (2) demonstrate the robustness of the proposed algorithm against attacks that are not necessarily the worst-case ones; and (3) investigate the running time as a function of the size of the input graph and the number of robots.
All experiments were performed on a Windows 64-bit laptop with 16 GB RAM and an 8-core Intel i5-8250U 1.6GHz CPU using Python 3.7.  

\subsection{Simulation Setup}
\begin{figure}[ht]
    \subfloat[SGA paths]{
    \includegraphics[width=0.24\textwidth]{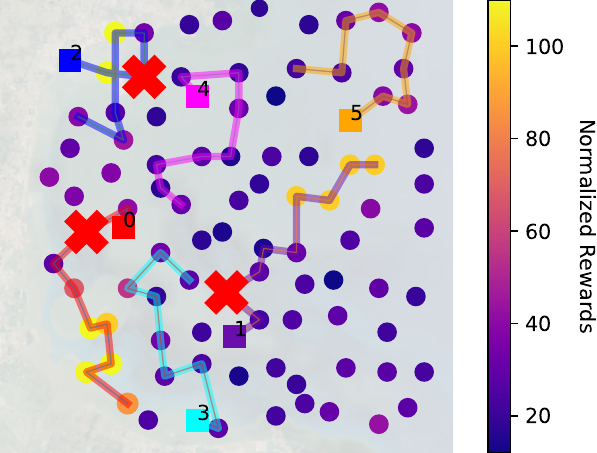}
    \label{Sim_SGA}
    }
    \subfloat[RMOP paths]{
    \includegraphics[width=0.24\textwidth]{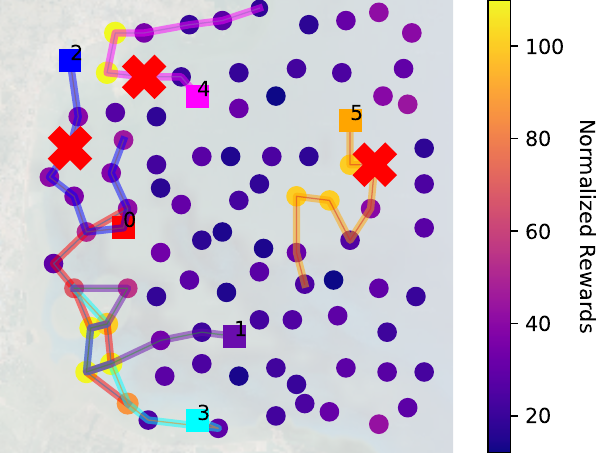}
    \label{Sim_RMOP}
    }
	\caption{Case study of monitoring macroalgal blooms using $N=6$ robots assuming $\alpha=3$ failures. Colored dots indicate locations to be monitored along with their importance (i.e., rewards). Red crosses indicate worst-case attacks found using brute force. Paths returned by the proposed algorithm manages to cover one of the three important areas (lower left corner) while SGA loses all three. The background map is part of the Yellow Sea, where green tides prevail every summer since 2007.
	}
	\label{scenario}
\end{figure}
\paragraph*{Application case study} We use the application of monitoring a marine environment for mapping oil leaks, macroalgal blooms, or pH values. Specifically, as explained in \cite{xing2019monitoring}, such tasks usually calls for collaboration of multiple sensors including satellites, which can provide coarse prior information on the concentration of the phenomenon of interest, and mobile robotic sensors, which can use the prior information for targeted data collection. Using this as motivation, we consider a scenario where prior information from satellites (for example) can be used to define an importance map over the environment to be monitoring. Fig. \ref{scenario} shows the setup which consists of 96 vertices placed in the environment. The color of the vertex reflects the importance of that vertex which gives the reward associated with visiting that vertex. Here, the single robot function, $g(\mathcal{P}_i)$, is a modular reward. The cost along the edges is the Euclidean distance between the vertices. Assuming unit speed of travel, the cost of a path reflects the travel time of the robot. In all the instances, each robot is given a budget $B=60$ units. 

\paragraph*{Baseline algorithms} Since we introduce RMOP in this paper, there is no other efficient algorithm to directly compare the performance with. One option is to compute the optimal solution (using, for example, brute-force enumeration) which quickly becomes intractable. Instead, we choose two approximation algorithms for MOP, the non-adversarial version, as baselines. The first one uses the sequential greedy assignment for all $N$ robots (we refer to it as SGA) where the path for robot $i$ is based on the paths computed for robots $1$ through $i-1$. The second baseline is the naive greedy algorithm where each robot naively (without considering the travel cost) and greedily (without considering other robots) maximize their rewards (we refer to it as NG). 

For both SGA and the proposed algorithm, we use the GCB algorithm solving OP due to its efficiency and ease of implementation. Specifically, we implement GCB using details provided in \cite{zhang2016submodular}. When running GCB, we simply set the relaxed budget itself to be $B$.

\paragraph*{Attack models} Our algorithm is designed to give performance guarantees against worst-case attacks. However, in practice, we would like for any algorithm to be robust to not just the worst-case attacks but also other attacks. Therefore, we evaluate two other types of attacks besides worst-case attacks. The details are provided in the next subsection.

\subsection{Results}
Fig.~\ref{scenario} shows a qualitative example comparing our proposed algorithm and SGA with $N=6$ and $\alpha=3$. Not surprisingly, the six paths found by SGA do not have any overlap but the ones found by our algorithm do. As a result, the worst-case attack takes away all three robots covering the important regions in SGA, whereas one of the three regions is still covered with our algorithm. The worst-case attacks were computed using brute force.

Next, we present quantitative results. In all the following figures, the error bar shows the variance of 20 trials where the starting robot positions are randomly chosen.

\begin{figure}[ht]
    \subfloat[\label{reward_after_worst_attack}]{
    \centerline{
    \includegraphics[width=0.3\textwidth]{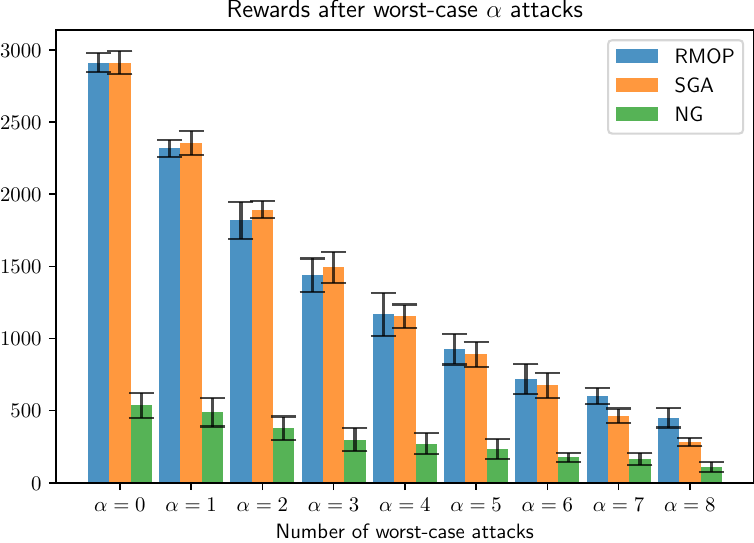}
    }
    }\\
    \subfloat[\label{reward_after_random_attack}]{
    \centerline{
    \includegraphics[width=0.3\textwidth]{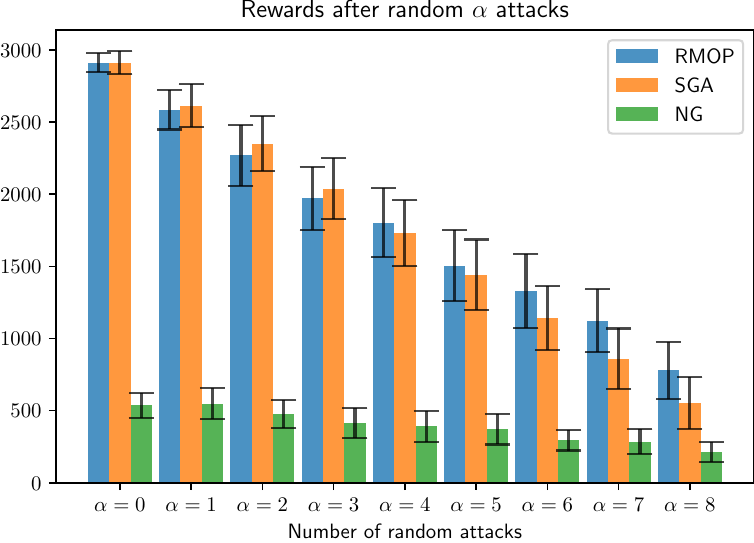}
    }
    }
	\caption{(a) Rewards after worst-case attack with increasing $\alpha$ and $N=10$. (b) Rewards after random $\alpha$ attacks and $N=10$. } 
	\label{reward_left}
\end{figure} 
Fig.~\ref{reward_after_worst_attack} shows the comparison between our algorithm for RMOP with SGA and NG as $\alpha$ increases with $N=10$. The bars show the rewards collected by the robots after attacks. Our algorithm returns paths that are slightly worse than SGA when $\alpha$ is small. This is not surprising since our algorithm will have overlapping paths whereas SGA will not. NG is the other extreme since each robot plans for itself which can lead to a high degree of overlap. {Though our algorithm has only comparable performance to SGA when $\alpha$ is relatively small,  the reward gap is small.  As $\alpha$ increases, our algorithm gradually significantly outperforms SGA. For example, when $\alpha=8$ our algorithm yields a reward of $451$ whereas SGA only yields $283$ on average. As a result,  in general,  the overall performance of the Algorithm 1 can be trusted especially for the large $\alpha$.  Moreover, in practice, since we cannot decide the threshold above which Algorithm 1 significantly outperforms SGA, we can run SGA and Algorithm 1 in parallel for offline planning if the planning budget permits and select the one that returns higher rewards.  In this way, we can preserve the same theoretical guarantee and have possibly better practical performance. }

Next, we evaluate the performance of our algorithm when the attack model does not match the worst-case one assumed during planning. The goal is to verify the robustness of the algorithm to other attack models. 
Fig.~\ref{reward_after_random_attack} shows the comparison between our algorithm and the two baselines as $\alpha$ varies when the attacked robots are randomly chosen. Our algorithm still plans to assume worst-case attacks. We observe the same trend with random attacks as with the worst-case ones --- as $\alpha$ increases, our algorithm outperforms SGA.

\begin{figure}
\centering
\includegraphics[width=0.35\textwidth]{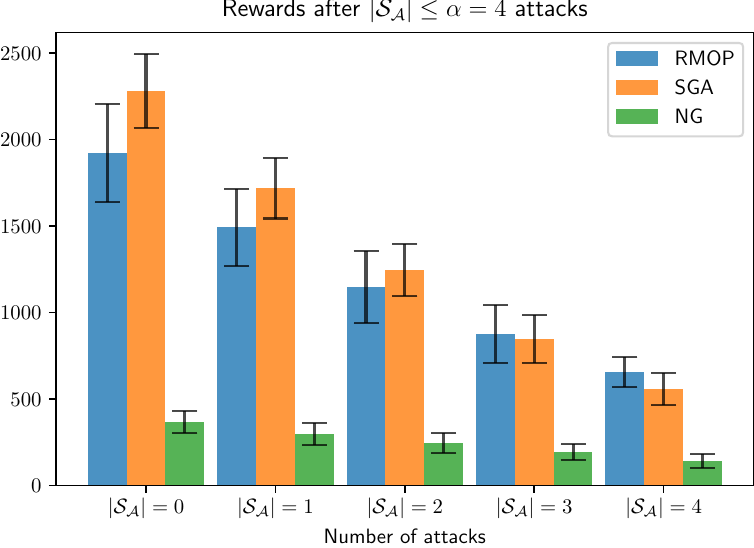}
\caption{Rewards after worst-case attacks of increasing sizes, $|\mathcal{S_A}| \leq \alpha$. Here the planner uses $N=7$ and $\alpha=4$.
\label{fig:salpha}
}
\end{figure}

\begin{figure}[ht]
    \subfloat[7 robots with $\alpha=4$]{
    \includegraphics[width=0.23\textwidth]{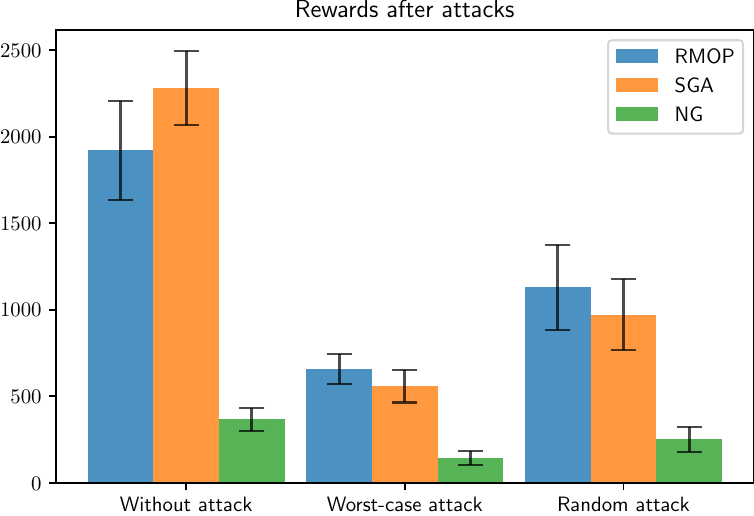}
    \label{reward_7_4}
    }
    \subfloat[8 robots with $\alpha=4$]{
    \includegraphics[width=0.23\textwidth]{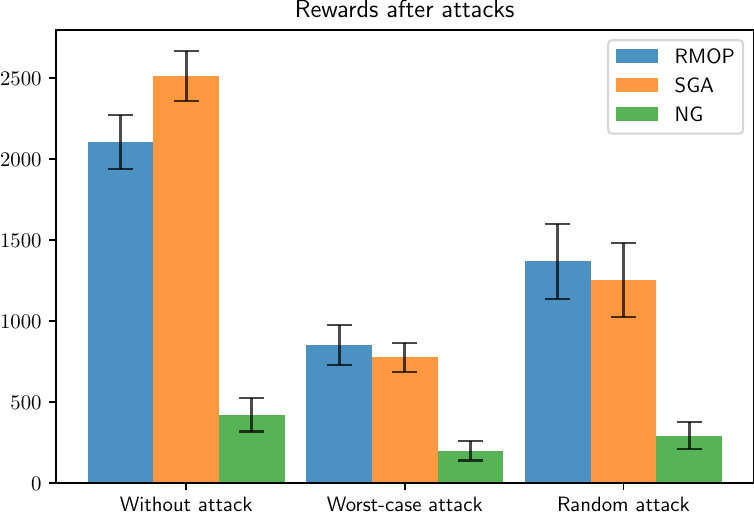}
    \label{reward_8_4}
    }\\
    \subfloat[9 robots with $\alpha=5$]{
    \includegraphics[width=0.23\textwidth]{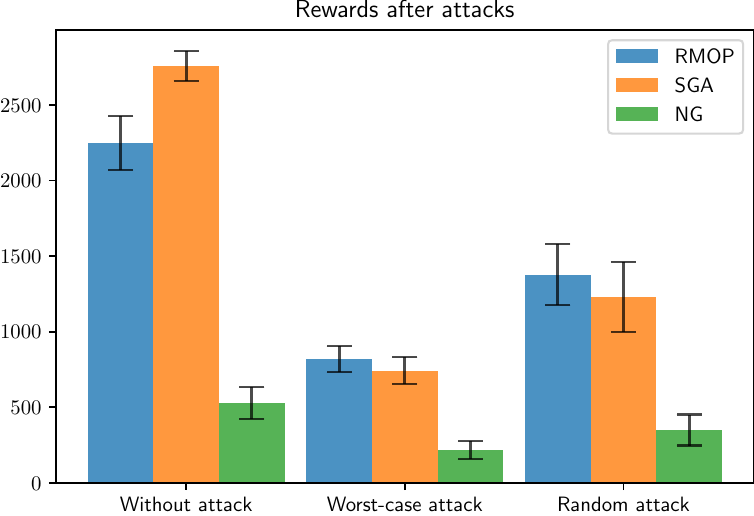}
    \label{reward_9_5}
    }
    \subfloat[10 robots with $\alpha=6$]{
    \includegraphics[width=0.23\textwidth]{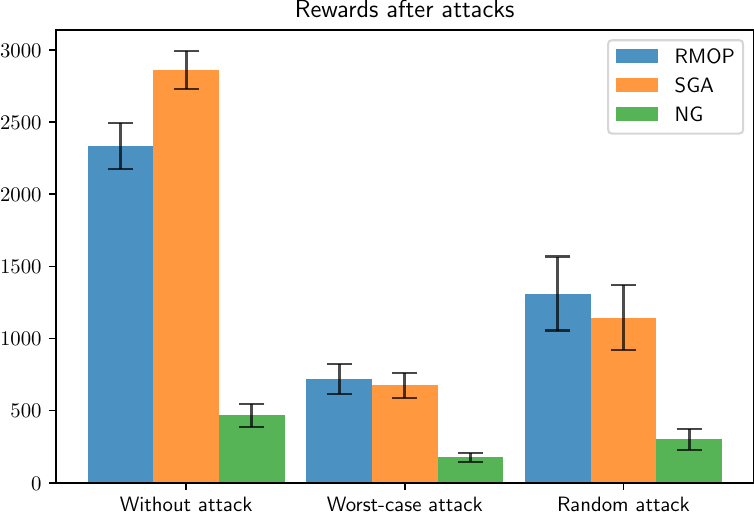}
    \label{reward_10_6}
    }
	\caption{Rewards after: (1) no attacks; (2) $\alpha$ out of $N$ robots under worst-case attack; (3) $\alpha$ out of $N$ robots under random attack.
	} 
	\label{reward_after_attack}
\end{figure} 
Fig. \ref{fig:salpha} shows results for the case where we construct paths assuming $\alpha=4$ robots will be attacked but in practice only $|\mathcal{S_A}|\leq \alpha$ robots suffer from worst-case attacks. SGA performs better than our algorithm when the number of robots actually attacked $|\mathcal{S_A}|$ is far from the designed value of $\alpha$. As the actual number of robots attacked increases and $|\mathcal{S_A}|$ approaches $\alpha$, our algorithm outperforms SGA. {This suggests that we need to have an accurate estimate of the $\alpha$ before applying our algorithm, which is one limitation of this paper. However, in many robotic applications, it's possible to estimate $\alpha$ using historical data. Take the information gathering in the marine environment for example. We can use the number of robots that survived in the previous executions of the mission to estimate $\alpha$.}

Fig.~\ref{reward_after_attack} shows the evaluation  
when there are (1) no attacks; (2) worst-case attacks; and (3) random attacks for four configurations of $N$ and $\alpha$. In all three cases, our algorithm still plans the paths assuming worst-case attacks for the given value of $\alpha$. When there are no attacks (first set of bars in each subfigure), SGA outperforms our algorithm as observed in previous charts. When worst-case attacks do happen (middle set of bars), the average rewards collected by the unattacked robots employing our algorithm are better than that of SGA. This is also the case when the $\alpha$ attacked robots are chosen randomly (third set of bars). This trend holds for various values of $N$ and $\alpha$ as shown.

\begin{figure}[ht]
    \subfloat[Running time w.r.t. vertices \label{running_time_vertices}]{
       \includegraphics[width=0.24\textwidth]{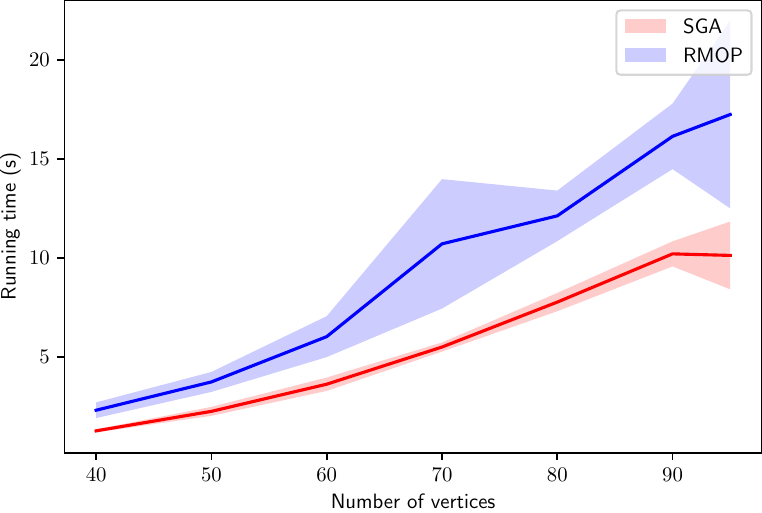}
     }
    \subfloat[Running time w.r.t. robots \label{running_time_robots}]{
       \includegraphics[width=0.24\textwidth]{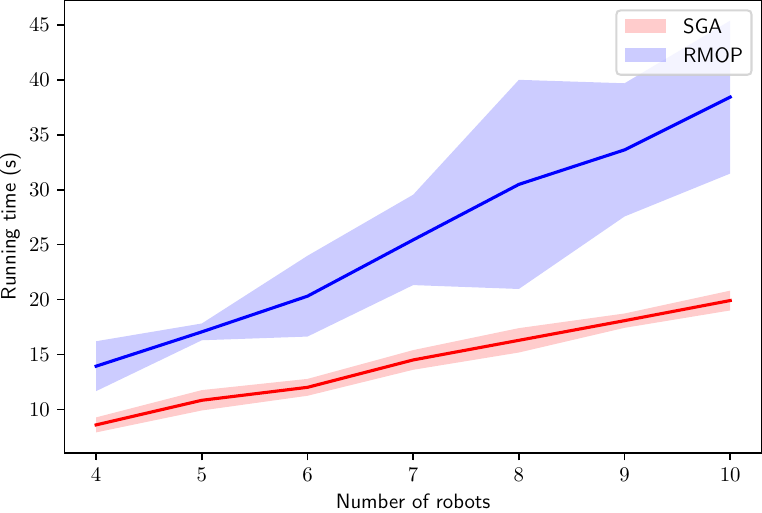}
     }
	\caption{Running time of Algorithm \ref{resilient_algorithm} and SGA}
	\label{running_time}
\end{figure} 
The running time comparisons between our algorithm and SGA are shown in Fig. \ref{running_time}. We vary the number of robots as well as the size of the graph. Our algorithm takes longer than SGA which is expected since it uses SGA as a subroutine. Nevertheless, we observe similar trends in the runtime.

\paragraph*{Discussion of Results} The results show the proposed algorithm works in practice as intended. As the number of attacked robots increases (either $\mathcal{S}_\mathcal{A}$ or $\alpha$), it outperforms SGA. Furthermore, we observe that the margin between our algorithm and SGA increases as the number of attacked robots increases. Even when our algorithm finds worse paths than SGA, they are still comparable to SGA and are significantly better than NG. We also observe that our algorithm is robust to the actual attack models --- even if the attacks are not the worst-case ones, we see similar trends. 

\section{Numerical Simulations for RMOP with Communication}\label{sim:RMOP with communication}
In this section, we validate the performance of Algorithm \ref{MCTS_algorithm} for the online RMOP. In particular, we present a case study on information gathering in a tunnel and compare the performance of the team when robots and attackers adopt different strategies.

\subsection{Simulation Setup}
\paragraph*{Application case study}We use the application of information gathering in a tunnel in which we assume that some coarse prior information on the rewards of some locations are known and a team of robots is sent to gather detailed information but at most $\alpha$ attackers may attack them at any locations. We also assume that communication, though maybe degraded, is available. We use a tunnel map from Defense Advanced Research Projects Agency (DARPA) Subterranean Challenge and use image skeletonization algorithm and corner detection algorithm \cite{van2014scikit} to identify several points as locations of interest. Fig. \ref{sim:tunnel} shows the setup which consists of 69 vertices placed in the environment. The color and the size of the node reflect the importance of that vertex which gives the reward associated with visiting that vertex. The corresponding graph abstraction is shown in Fig. \ref{sim:tunnel_graph} in which we assume that it takes one unit of time to transit between two adjacent nodes. Even though adding some virtual nodes on the long edges will make this assumption better justified, for simplicity, we ignore these virtual nodes. Here, the single robot function, $g(\mathcal{P}_i)$, is a modular reward. The cost along the edge is the distance between two adjacent nodes which is defined in the image coordinate as the shortest distance in the skeleton image (739 $\times$ 520 pixels). In this case study, there are four robots in the environment and $\alpha=2$ attackers and each robot has a budget of $B=500$ units. 

 We also test the performance of our strategy compared to other strategies in the randomly generated graphs. We generate four $15 \times 15$ grid graphs, whose edges are of unit length, to represent the environments. To account for the sparsity of the tunnel environment, half of the edges are randomly removed in each graph but the graph remains connected. For each graph, the reward of each node is generated by sampling a number from one exponential distribution. We use different rate parameters for different graphs. It should be noted that our algorithm doesn't depend on the particular distribution of rewards. We choose the exponential distribution just for its non-negative support. In all instances, there are four robots in the environment and $\alpha=2$ attackers and each robot has a budget of $B=8$ units. 

\begin{figure}[ht]
    \subfloat[Tunnel map]{
    \includegraphics[width=0.28\textwidth]{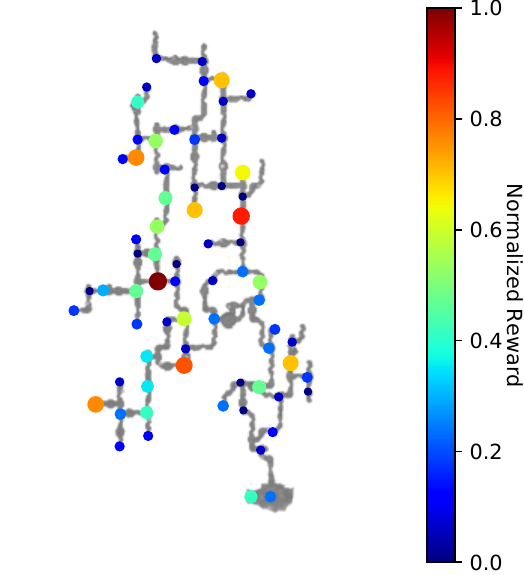}
    \label{sim:tunnel}
    }
    \subfloat[Tunnel graph]{
    \includegraphics[width=0.16\textwidth]{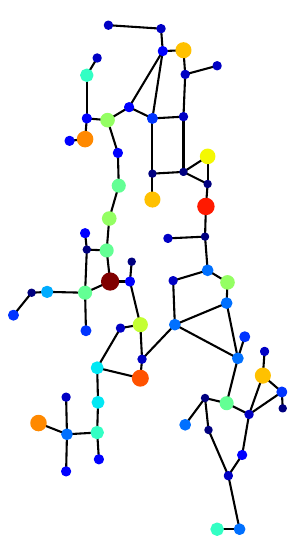}
    \label{sim:tunnel_graph}
    }
	\caption{A tunnel map for a case study of information gathering. Colored dots with various sizes indicate locations to be visited along with their importance i.e., rewards. (a) A tunnel map with 69 locations of interest. (b) The graph abstraction of the tunnel map. The tunnel map is from Defense Advanced  Research  Projects  Agency (DARPA) Subterranean Challenge.
	}
	\label{fig:tunnel}
\end{figure}

\paragraph*{Strategies} We consider two strategies for robots including MCTS with adversaries (Algorithm \ref{MCTS_algorithm}, two-player search, we refer it as MA) and naive MCTS (don't consider failures/attacks, one-player search, we refer it as M) and two strategies for attackers including MCTS with adversaries (Similar to Algorithm \ref{MCTS_algorithm}, two-player search, we refer it as MA, and RandomMove (randomly choose an available action each time, we refer it as R). Problem \ref{RMOP_with_communication} is simulated as a two-player game in which at each step attackers first use their strategy to choose one available action and then robots choose one action. Such a process continues until all robots run out of budget. 

\subsection{Results}

\begin{figure*}[ht]
    \subfloat[robot 0]{
    \includegraphics[width=0.23\textwidth]{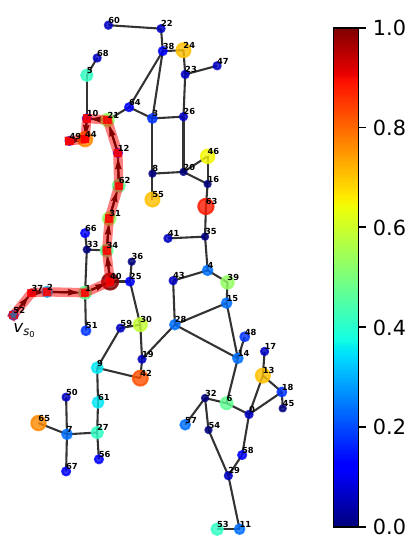}
    \label{sim:robot0_path}
    }
    \subfloat[robot 1]{
    \includegraphics[width=0.23\textwidth]{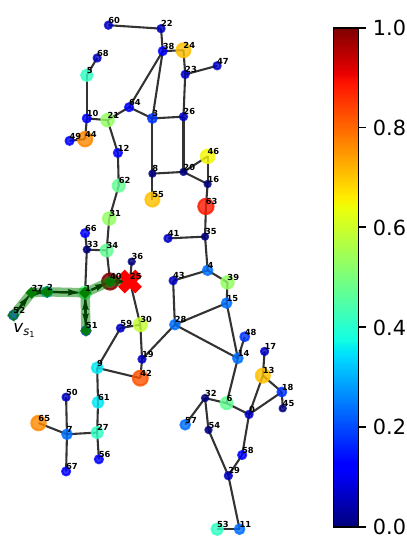}
    \label{sim:robot1_path}
    }
    \subfloat[robot 2]{
    \includegraphics[width=0.23\textwidth]{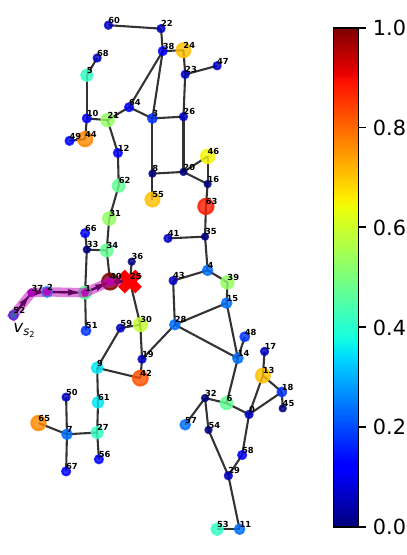}
    \label{sim:robot2_path}
    }
    \subfloat[robot 3]{
    \includegraphics[width=0.23\textwidth]{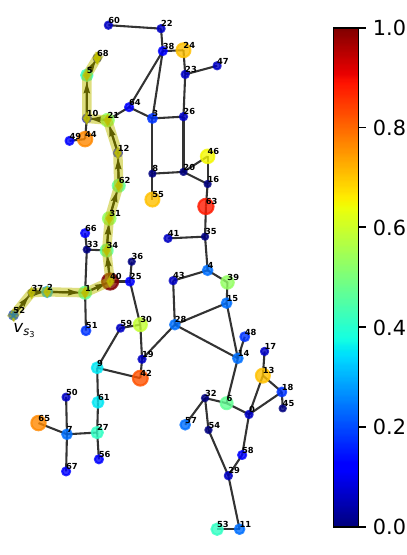}
    \label{sim:robot3_path}
    }
	\caption{A case study of information gathering in a tunnel with $N=4$ robots assuming $\alpha=2$ failures/attacks. Colored dots indicate locations to be visited along with their importance (i.e., rewards). Red across indicates the attacks found when robots and attackers play the two-player sequential game and both use MCTS with adversaries. $v_{s_i}$ represents the starting position of the robot $i$. Attackers launch the first attack when they observe that robot 0 and 3 reach node 34 and robot 2 reaches node 25 at the fifth step and launch another attack later when robot 1 reaches node 25 at the seventh step. (a) robot 0 follows the path $52 \rightarrow 37 \rightarrow 2 \rightarrow 1 \rightarrow 40 \rightarrow 34 \rightarrow 31 \rightarrow 62 \rightarrow 12 \rightarrow 21 \rightarrow 10 \rightarrow 44 \rightarrow 49$. (b) robot 1 follows the path $52 \rightarrow 37 \rightarrow 2 \rightarrow 1 \rightarrow 51 \rightarrow 1 \rightarrow 40 \rightarrow 25$ and is attacked after two steps. (c) robot 2 follows the path $52 \rightarrow 37 \rightarrow 2 \rightarrow 1 \rightarrow 40 \rightarrow 25$ and is attacked after three steps. (d) robot 3 follows the path $52 \rightarrow 37 \rightarrow 2 \rightarrow 1 \rightarrow 40 \rightarrow 34 \rightarrow 31 \rightarrow 62 \rightarrow 12 \rightarrow 21 \rightarrow 10 \rightarrow 5 \rightarrow 68$. 
	} 
	\label{sim:MCTS_paths}
\end{figure*} 
Fig. \ref{fig:tunnel} shows the tunnel map from DARPA subterranean challenge. Fig. \ref{sim:tunnel} is the original tunnel map with 69 locations of interest and Fig. \ref{sim:tunnel_graph} is the corresponding abstract graph representation of the tunnel map. Fig. \ref{sim:MCTS_paths} shows a qualitative example of how robots and attackers behave in a two-player sequential game when both of them use MCTS with adversaries. In each step, attackers will first grow the search tree based on what they observed so far and choose an action. Then, robots will grow the search tree and choose one action. Such a process continues until the budgets of robots are used up. As a result, robot 0 follows the path $52 \rightarrow 37 \rightarrow 2 \rightarrow 1 \rightarrow 40 \rightarrow 34 \rightarrow 31 \rightarrow 62 \rightarrow 12 \rightarrow 21 \rightarrow 10 \rightarrow 44 \rightarrow 49$; robot 1 follows a path $52 \rightarrow 37 \rightarrow 2 \rightarrow 1 \rightarrow 51 \rightarrow 1 \rightarrow 40 \rightarrow 25$ and is attacked at node 25; robot 2 follows a path $52 \rightarrow 37 \rightarrow 2 \rightarrow 1 \rightarrow 40 \rightarrow 25$ and is attacked at node 25; robot 3 follows a path $52 \rightarrow 37 \rightarrow 2 \rightarrow 1 \rightarrow 40 \rightarrow 34 \rightarrow 31 \rightarrow 62 \rightarrow 12 \rightarrow 21 \rightarrow 10 \rightarrow 5 \rightarrow 68$. As shown in Fig. \ref{sim:MCTS_paths}, attackers launch the first attack when they observe that robots 0 and 3 reach node 34 and robot 2 reaches node 25 because if they don't attack at that moment robot 2 will move downward to collect more rewards. Attackers launch another attack later when robot 2 reaches node 25 and may collect more rewards from the nodes below.

Fig. \ref{sim:different strategies} shows the results when robots and attackers use different strategies in four graphs. Robots are randomly initialized in different vertices and for each initialization, the two-player game is conducted 20 times. The collected reward of the team is the sum of the rewards of nodes visited. As shown in Fig. \ref{sim:different strategies}, when attackers use MCTS with adversaries (first two bars blue and orange), robots can collect more reward on average if they also use the MCTS with adversaries (orange) compared to the case where they use a naive MCTS without considering attacks. If attackers use a random strategy (last two bars green and red), the MCTS with adversaries strategy (red) is also on average better than a naive MCTS without considering attacks (green). Moreover, robots can collect more rewards on average if attackers just select an action randomly (green and red compared to blue and orange).  
\begin{figure}[ht]
    \subfloat[]{
    \includegraphics[width=0.23\textwidth]{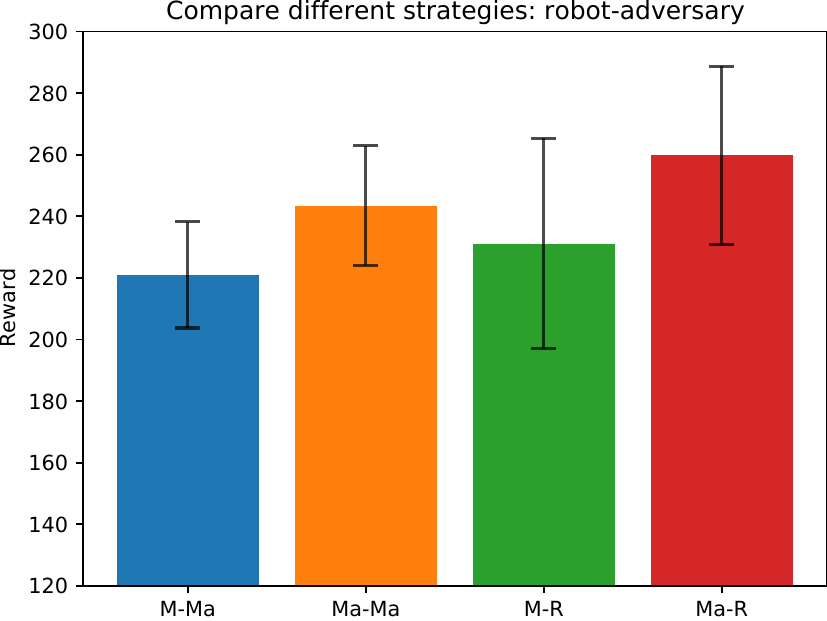}
    \label{sim:graph1}
    }
    \subfloat[]{
    \includegraphics[width=0.23\textwidth]{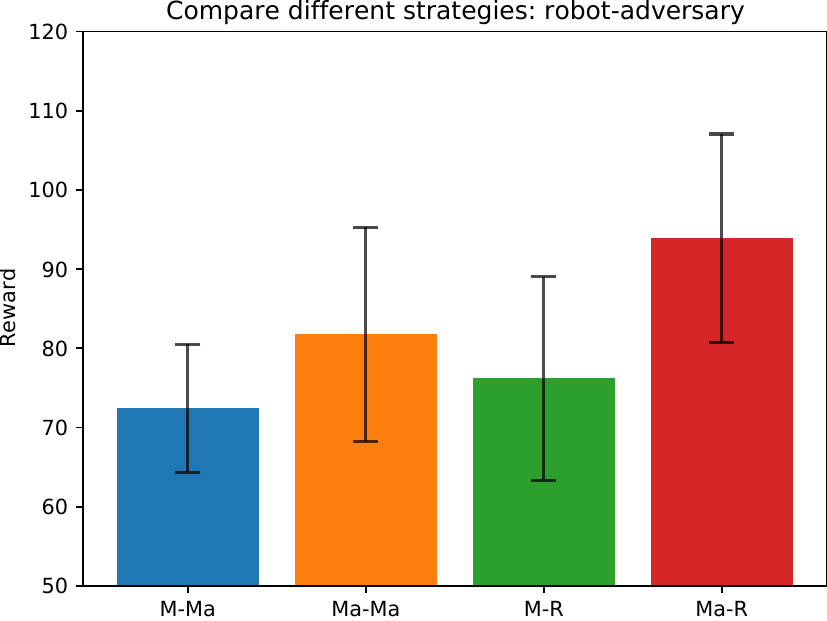}
    \label{sim:graph2}
    }\\
    \subfloat[]{
    \includegraphics[width=0.23\textwidth]{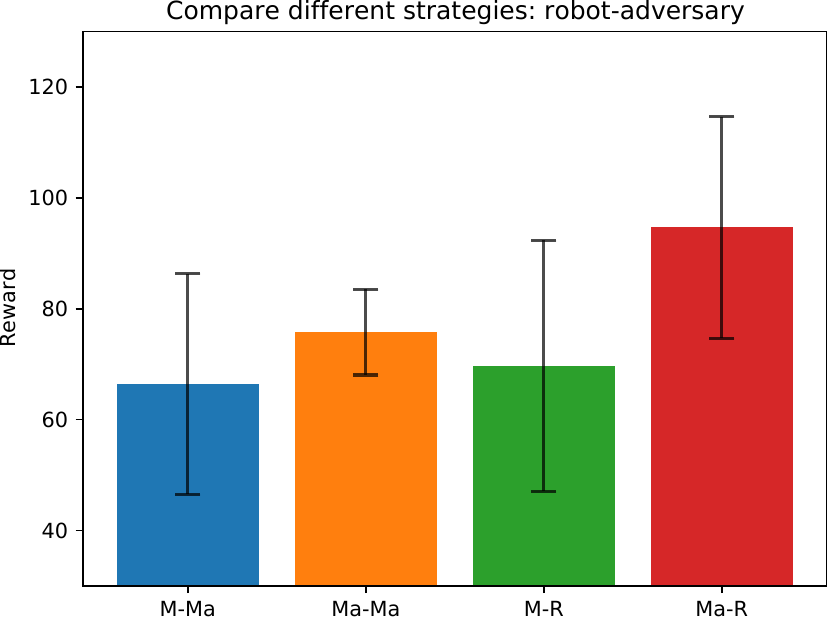}
    \label{sim:graph3}
    }
    \subfloat[]{
    \includegraphics[width=0.23\textwidth]{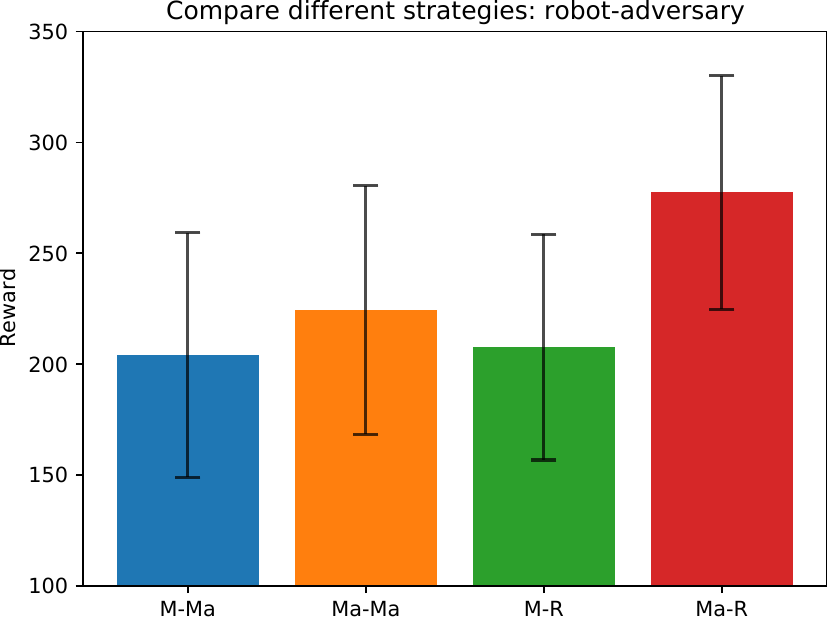}
    \label{sim:graph4}
    }
	\caption{Collected rewards for different strategies in four different maps: (1) M-MA: robots use one-player MCTS (M) and attackers use MCTS with adversaries (MA); (2) MA-MA: both robots and attackers use MCTS with adversaries; (3) M-R: robots use one-player MCTS (M) and attackers attacks randomly (R); (4) Ma-R: robots use MCTS with adversaries and attackers attacks randomly (R). 
	} 
	\label{sim:different strategies}
\end{figure}

\section{Conclusion}
We introduced a new problem, termed Robust Multiple-Path Orienteering Problem, in which we seek to construct a set of paths for robots such that even if a subset of robots fails, the rest of the team still performs well. We consider two types of RMOP. In the offline RMOP in which robots cannot communicate with each other or the base station during the execution of tasks, we provided a general approximation framework for the offline RMOP, which builds on bounded approximation algorithms for OP and the sequential greedy assignment framework. We showed three variants of the general algorithm that use three different subroutines for OP and still yield a bounded approximation for RMOP. In addition to theoretical results, we presented empirical results showing that our algorithm is robust to attacks other than the worst-case ones. We also compare our performance with baseline algorithms and show that our algorithm yields better performance as more and more robots are attacked.
In the online version, RMOP is modeled as a two-player sequential game and solved adaptively in a receding horizon fashion based on Monte Carlo Tree Search (MCTS). Simulation results show that MCTS with adversaries performs better on average than the MCTS without considering attacks/failures.

\bibliographystyle{IEEEtran}
\bibliography{IEEEabrv,resilient_orienteering}

\appendix
\section*{Supplementary Materials}

\subsection{Proof of Proposition~\ref{no_overlap_assumption}}
\begin{proof}
\begin{figure}[htbp]
\centerline{\includegraphics{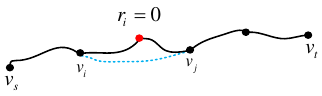}}
\caption{A feasible path that contains vertex (red) with zero rewards. A new path can be constructed by connecting the vertices before ($v_i$) and after $v_j$ the zero-reward vertex (blue dotted line). The new path has the same reward and is feasible since we have a metric graph. }
\label{path_indep}
\end{figure}
Suppose that $\exists  i, j  \text{ such that } \mathcal{Q}^{*}_{i} \cap \mathcal{Q}^{*}_{j}=\{v\}$. According to the definition of objective function $f$, vertex $v$ will contribute to $f$ only once even if it appears in multiple paths. Therefore, if we keep vertex $v$ in $ \mathcal{Q}^{*}_{i}$, it becomes a `zero-reward' vertex for $\mathcal{Q}^{*}_{j}$. Then we can directly connect the vertices before and after that `zero-reward' vertex to obtain  $\mathcal{Q}^{*}_{j}$ without changing the overall performance of the team. The new path $\mathcal{Q}^{*}_{j}$ is still feasible since we assume the graph is metric. An illustrative example is shown in Fig. \ref{path_indep}. {As for the case where there are multiple intersect vertices between two paths, we can use the above argument for each vertex one by one. }
\end{proof}

\subsection{Proof of Theorem \ref{main_theorem}}
\begin{proof}
Our proof relies on the following three inequalities:
\begin{equation}\label{s2_ineq1}
    f(\mathcal{S} \setminus {\mathcal{A}}^{*}(S)) \geq (1-k_f)f(\mathcal{S}_2)
\end{equation}
\begin{equation}\label{s2_ineq2}
    f(\mathcal{S} \setminus {\mathcal{A}}^{*}(S)) \geq \frac{1}{\alpha+1}f(\mathcal{S}_2)
\end{equation}
\begin{equation}\label{s2_ineq3}
    f(\mathcal{S} \setminus {\mathcal{A}}^{*}(S)) \geq \frac{1}{N-\alpha}f(\mathcal{S}_2)
\end{equation}
The proof for inequality \eqref{s2_ineq1} is similar to the proof (Eq. 16-20) in \cite{tzoumas2017resilient} when combined with the invariant maintained by the \textbf{while} loop in Algorithm \ref{resilient_algorithm}. Likewise, the proof for inequality \eqref{s2_ineq2} resembles that given in \cite{tzoumas2017resilient} (from Eq. 21 to Eq. 25) and inequality \eqref{s2_ineq3} can be proved in the same fashion as that in \cite{tzoumas2018resilientnon} (Eq. 57 to Eq. 58). For completeness, we give the proofs in the supplementary document.

From Theorem \ref{SGA_theorem} we have,
\begin{equation}\label{S2_S2star}
    f(\mathcal{S}_2) \geq \frac{1}{\frac{1+\eta}{1-k_g}}f(\mathcal{Q}^{*})
\end{equation}
where $\mathcal{Q}^{*}$ is the optimal solution to the multi-path orienteering problem for robots $\mathcal{R}\setminus\mathcal{R}(\mathcal{S}_1)$.
According to Lemma 9 in \cite{tzoumas2018resilientnon}, we have another inequality:
\begin{equation}\label{Q_star2global_optimal}
    f(\mathcal{Q}^{*}) \geq f^{*}=f(\mathcal{S}^{*}\setminus\mathcal{A}^{*}(\mathcal{S}^{*}))
\end{equation}
where $f^{*}$ is the optimal solution to RMOP. For completeness, we give the proof in the supplementary document.
{Combining} inequalities \eqref{s2_ineq1} -- \eqref{Q_star2global_optimal}, we get the statement for Theorem \ref{main_theorem}. 
\end{proof}

\subsection{Proof of Inequality in Equation (\ref{s2_ineq1})}
We will prove the following,
\begin{equation}\label{ineq1}
    f(\mathcal{S} \setminus {\mathcal{A}}^{*}(S)) \geq (1-k_f)f(\mathcal{S}_2)
\end{equation}
where $\mathcal{S}=\mathcal{S}_1 \cup \mathcal{S}_2$ is the solution returned by our algorithm; ${\mathcal{A}}^{*}(S)$ is the optimal removal of $\mathcal{S}$; $k_f$ is the curvature of function $f$.
Next, we prove inequality \ref{ineq1} and we will use Lemma 1 from \cite{tzoumas2017resilient} without proof. Proof here is essentially the same as that in \cite{tzoumas2017resilient} but with different notations for better understanding.
\begin{lemma}\label{A2a}
Consider a finite ground set $\mathcal{V}$ and a monotone set function $f:2^{\mathcal{V}}\rightarrow\mathbb{R}$ such that $f$ is a non-negative and $f(\emptyset)=0$. For any set $\mathcal{A} \subseteq \mathcal{V}$, 
\begin{equation}
    f(\mathcal{A}) \geq (1-k_f)\sum_{a \in \mathcal{A}}f(a)
\end{equation}
\end{lemma}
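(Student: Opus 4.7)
The plan is a two-step telescoping argument that leverages submodularity together with the curvature definition. I note at the outset that although the statement only says ``monotone'', the bound genuinely requires submodularity of $f$ (consistent with the body of the paper, since the team reward $f$ in Equation~\eqref{eqn:teamreward} is submodular), and I would add that hypothesis explicitly to the lemma.

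First, I would extract a pointwise curvature bound. By Definition~\ref{curvature}, for every $v\in\mathcal{V}$,
\[
f(\mathcal{V}) - f(\mathcal{V}\setminus\{v\}) \;\geq\; (1-k_f)\, f(v).
\]
Submodularity says that marginal gains can only grow as the base set shrinks, so for any $\mathcal{B}\subseteq\mathcal{V}\setminus\{v\}$,
\[
f(\mathcal{B}\cup\{v\}) - f(\mathcal{B}) \;\geq\; f(\mathcal{V}) - f(\mathcal{V}\setminus\{v\}) \;\geq\; (1-k_f)\, f(v).
\]

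Next, I would telescope this pointwise bound over an arbitrary enumeration $\mathcal{A}=\{a_1,\dots,a_k\}$. Writing $\mathcal{A}_i \triangleq \{a_1,\dots,a_i\}$ and using $f(\emptyset)=0$,
\[
f(\mathcal{A}) \;=\; \sum_{i=1}^{k} \bigl[ f(\mathcal{A}_i) - f(\mathcal{A}_{i-1}) \bigr] \;\geq\; \sum_{i=1}^{k} (1-k_f)\, f(a_i) \;=\; (1-k_f)\sum_{a\in\mathcal{A}} f(a),
\]
where each marginal is bounded below by applying the pointwise inequality with $\mathcal{B}=\mathcal{A}_{i-1}\subseteq\mathcal{V}\setminus\{a_i\}$. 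This is the claim.

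I do not anticipate a substantial obstacle. The only subtlety is the direction of the submodular inequality: $\mathcal{V}\setminus\{v\}$ must be the \emph{larger} base so that $f(\mathcal{V})-f(\mathcal{V}\setminus\{v\})$ is the \emph{smallest} possible marginal of $v$, which (combined with the worst-case $v$ chosen in the definition of $k_f$) is exactly what lets the bound propagate to every intermediate prefix $\mathcal{A}_{i-1}$. Beyond that, the argument is a routine telescoping sum, which explains why the authors are content to cite it from \cite{tzoumas2017resilient} without reproving it.
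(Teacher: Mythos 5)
Your proof is correct. Note, however, that the paper does not actually prove this lemma anywhere --- it is imported verbatim as Lemma~1 of \cite{tzoumas2017resilient} with the explicit remark that it is used ``without proof'' --- so there is no in-paper argument to compare against; your telescoping derivation is the standard one and coincides with the proof in that cited reference. The two steps are sound: the pointwise bound $f(\mathcal{V})-f(\mathcal{V}\setminus\{v\})\geq(1-k_f)f(v)$ is immediate from Definition~\ref{curvature} (and the division there is well-posed because normalization forces $f(v)>0$ for singletons), the diminishing-returns inequality correctly transfers this worst-case marginal to every prefix $\mathcal{A}_{i-1}\subseteq\mathcal{V}\setminus\{a_i\}$, and the telescoping sum with $f(\emptyset)=0$ closes the argument. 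Your observation that submodularity is a genuine hypothesis --- absent from the lemma statement as written but required both for Definition~\ref{curvature} to apply and for the second step of the proof --- is apt and would be a worthwhile correction to the statement; in the paper's use case $f$ is always the submodular team reward of Equation~\eqref{eqn:teamreward}, so nothing downstream is affected.
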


\begin{figure}[ht]
	\centerline{\includegraphics[scale=0.7]{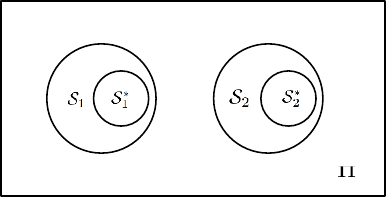}}
	\caption{Venn diagram, where $\mathcal{S}_1, \mathcal{S}_2, \mathcal{S}_1^{*}, \mathcal{S}_2^{*}$ are defined as follows: Per run of proposed algorithm for RMOP, $\mathcal{S}_1$ and $\mathcal{S}_2$ are intermediate results such that $\mathcal{S}=\mathcal{S}_1 \cup \mathcal{S}_2$, and $\mathcal{S}_1 \cap \mathcal{S}_2 = \emptyset$. Let $\mathcal{S}_{\mathcal{A}}^{*}(\mathcal{S})$ be the optimal removal from $\mathcal{S}$. Then $\mathcal{S}_1^{*}, \mathcal{S}_2^{*}$ are defined such that $\mathcal{S}_1^{*}={\mathcal{A}}^{*}(\mathcal{S}) \cap \mathcal{S}_1$ and $\mathcal{S}_2^{*}={\mathcal{A}}^{*}(\mathcal{S}) \cap \mathcal{S}_2$. By definition, 
	$\mathcal{S}_1^{*} \cap \mathcal{S}_2^{*}=\emptyset$ and $\mathcal{S}_{\mathcal{A}}^{*}(\mathcal{S})=\mathcal{S}_1^{*} \cup \mathcal{S}_2^{*}$.
	} 
	\label{Venn diagram}
\end{figure}
Let $\mathcal{S}_1^{+}=\mathcal{S}_1 \setminus \mathcal{S}_1^{*}$ and $\mathcal{S}_2^{+}=\mathcal{S}_2 \setminus \mathcal{S}_2^{*}$
\begin{align}
    f(\mathcal{S}\setminus{\mathcal{A}}^{*}(\mathcal{S})) &= f(\mathcal{S}_1^{+} \cup \mathcal{S}_2^{+}) \label{V_1}\\
    & \geq (1-k_f) \sum_{s \in \mathcal{S}_1^{+} \cup \mathcal{S}_2^{+}}f(s) \label{V_2}\\
    & \geq (1-k_f)(\sum_{s \in \mathcal{S}_2 \setminus \mathcal{S}_2^{+}}f(s) +
     \sum_{s \in \mathcal{S}_2^{+}}f(s)) \label{V_3}\\
    & \geq  (1-k_f)(f(\mathcal{S}_2 \setminus \mathcal{S}_2^{+})+f(\mathcal{S}_2^{+})) \label{V_4}\\
    & \geq (1-k_f)((\mathcal{S}_2 \setminus \mathcal{S}_2^{+}) \cup \mathcal{S}_2^{+}) \label{V_5}\\
    & = (1-k_f)f(\mathcal{S}_2) \label{V_6}
\end{align}
where \eqref{V_1} holds by definition; \eqref{V_1} to \eqref{V_2} holds due to Lemma \ref{A2a}; \eqref{V_3} follows from \eqref{V_2} since all paths $s \in \mathcal{S}_1^{+}$ and all paths $s^{\prime} \in \mathcal{S}_2 \setminus \mathcal{S}_2^{+}$, the inequality $f(s) \geq f(s^{\prime})$ holds, i.e. paths in $\mathcal{S}_1$ have more rewards compared to that in $\mathcal{S}_2$ (note that by definitions of sets $\mathcal{S}_1^{+}$ and $\mathcal{S}_2^{+}$ it is $|\mathcal{S}_1^{+}|=|\mathcal{S}_2^{*}|=|\mathcal{S}_2 \setminus \mathcal{S}_2^{+}|$, i.e. the number of non-removed paths in $\mathcal{S}_1$ is equal to the number of removed paths in $\mathcal{S}_2$); from \eqref{V_3} to \eqref{V_4}, it is due to submodularity; and \eqref{V_6} follows from \eqref{V_5} by definition. 

\subsection{Proof of Inequality in Equation (\ref{s2_ineq2})}
We will prove the following,
\begin{equation}\label{ineq2}
    f(\mathcal{S} \setminus {\mathcal{A}}^{*}(S)) \geq \frac{1}{\alpha+1}f(\mathcal{S}_2)
\end{equation}
We will use Lemma 2 from \cite{tzoumas2017resilient} without proof. Proof here is essentially the same as that in \cite{tzoumas2017resilient} but with notations consistent with our paper for better understanding.
\begin{lemma}\label{a_geq_b}
Consider any finite ground set $\mathcal{V}$, a monotone submodular function $f:2^{\mathcal{V}}\rightarrow\mathbb{R}$ such that $f$ is a non-negative and $f(\emptyset)=0$. Consider two non-empty sets $\mathcal{Y}, \mathcal{P} \subseteq \mathcal{V}$ such that for all elements $y \in \mathcal{Y}$ and all elements $p \in \mathcal{P}$ it is $f(y) \geq f(p)$. Then, 
\begin{equation}
    f_{\mathcal{Y}}(\mathcal{P}) \leq |\mathcal{P}|f(\mathcal{Y})
\end{equation}
\end{lemma}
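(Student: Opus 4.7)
The plan is to combine submodularity with monotonicity so as to push the pointwise domination hypothesis up to a set-level bound. Enumerate $\mathcal{P}=\{p_1,\ldots,p_k\}$ with $k=|\mathcal{P}|$ in any fixed order, and write the marginal contribution $f_{\mathcal{Y}}(\mathcal{P})$ as a telescoping sum over the elements added one at a time:
\begin{equation*}
f_{\mathcal{Y}}(\mathcal{P}) \;=\; \sum_{i=1}^{k} \Bigl[\, f\bigl(\mathcal{Y}\cup\{p_1,\ldots,p_i\}\bigr) - f\bigl(\mathcal{Y}\cup\{p_1,\ldots,p_{i-1}\}\bigr) \,\Bigr].
\end{equation*}

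Next, apply submodularity to each summand. Since $\emptyset \subseteq \mathcal{Y}\cup\{p_1,\ldots,p_{i-1}\}$, the diminishing-returns property implies that the marginal gain of adding $p_i$ to the larger set is no greater than its marginal gain over the empty set, namely $f(\{p_i\})-f(\emptyset)=f(p_i)$ by the normalization $f(\emptyset)=0$. Summing over $i$ gives $f_{\mathcal{Y}}(\mathcal{P})\leq \sum_{i=1}^{k} f(p_i)$.

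Finally, convert each singleton bound to one involving $f(\mathcal{Y})$. Pick any $y_0 \in \mathcal{Y}$, which is possible since $\mathcal{Y}$ is non-empty. The hypothesis of the lemma gives $f(p_i)\leq f(y_0)$ for every $i$, and monotonicity gives $f(y_0)\leq f(\mathcal{Y})$. Chaining these yields $f(p_i)\leq f(\mathcal{Y})$ for all $i$, hence
\begin{equation*}
f_{\mathcal{Y}}(\mathcal{P}) \;\leq\; \sum_{i=1}^{k} f(p_i) \;\leq\; k\, f(\mathcal{Y}) \;=\; |\mathcal{P}|\, f(\mathcal{Y}),
\end{equation*}
which is the claim.

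\emph{Main obstacle.} The argument is a textbook blend of telescoping, submodularity, and monotonicity, so there is no major technical obstacle. The only subtle move is the bridge from the pointwise hypothesis $f(y)\geq f(p)$ to the set-level conclusion $f(\mathcal{Y})\geq f(p)$; this is handled for free by monotonicity, which inflates $f(y_0)$ up to $f(\mathcal{Y})$. One should also be slightly careful with elements of $\mathcal{P}\cap\mathcal{Y}$: they contribute zero marginal to the telescoping sum and satisfy the singleton bound trivially, so the argument works regardless of whether $\mathcal{P}$ and $\mathcal{Y}$ overlap.
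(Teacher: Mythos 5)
Your argument is correct. Note, however, that the paper does not prove this lemma at all: it imports it as Lemma~2 of the cited reference (Tzoumas et al.) ``without proof,'' and the proof given there is essentially the telescoping-plus-diminishing-returns argument you wrote, namely $f_{\mathcal{Y}}(\mathcal{P}) \leq \sum_{p\in\mathcal{P}} f_{\mathcal{Y}}(p) \leq \sum_{p\in\mathcal{P}} f(p) \leq |\mathcal{P}|\,f(y_0) \leq |\mathcal{P}|\,f(\mathcal{Y})$. So your proposal supplies a valid proof that matches the standard one; your side remarks about elements of $\mathcal{P}\cap\mathcal{Y}$ (zero marginal, handled by non-negativity) and about bridging $f(y_0)\leq f(\mathcal{Y})$ via monotonicity are both accurate.
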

First we introduce one notation:
\begin{equation}
    \xi = \frac{f_{\mathcal{S}\setminus{\mathcal{A}}^{*}(\mathcal{S})}(\mathcal{S}_2^{*})}{f(\mathcal{S}_2)}
\end{equation}
To prove \eqref{ineq2}, we still need to discuss two cases: $\mathcal{S}_2^{*} = \emptyset$ and $\mathcal{S}_2^{*} \neq \emptyset$. When $\mathcal{S}_2^{*} = \emptyset$, we have $f(\mathcal{S} \setminus {\mathcal{A}}^{*}(S))=f(\mathcal{S}_2)$, and \eqref{ineq2} holds. Next, we consider the case where $\mathcal{S}_2^{*} \neq \emptyset$ holds. The proof starts with one observation that
\begin{equation}\label{intermediate}
    f(\mathcal{S} \setminus {\mathcal{A}}^{*}(S)) \geq \max\{f(\mathcal{S} \setminus {\mathcal{A}}^{*}(S)), f(\mathcal{S}_1^{+})\},
\end{equation}
and then prove the following three inequalities:
\begin{equation}\label{intermediate-1}
    f(\mathcal{S} \setminus {\mathcal{A}}^{*}(S)) \geq (1-\xi)f(\mathcal{S}_2)
\end{equation}
\begin{equation}\label{intermediate-2}
    f(\mathcal{S}_{1}^{+}) \geq \xi\frac{1}{\alpha}f(\mathcal{S}_2)
\end{equation}
\begin{equation}\label{intermediate-3}
    \max\{(1-\xi, \xi \frac{1}{\alpha})\} \geq \frac{1}{\alpha+1}
\end{equation}
Next, if substitute \eqref{intermediate-1}, \eqref{intermediate-2}, and \eqref{intermediate-3} to \eqref{intermediate}, then \eqref{ineq2} is proved.

\begin{inparaenum}[1)]
\item \textit{Proof of inequalities $0 \leq \xi \leq 1$}: Since $f$ is non-negative and therefore by definition $\xi \geq 0$. For numerator of $\xi$, by submodularity, $f_{\mathcal{S}\setminus{\mathcal{A}}^{*}(\mathcal{S})}(\mathcal{S}_2^{*}) \leq f(\mathcal{S}_2^{*})$ and notice that $\mathcal{S}_2^{*}$ is a subset of  $\mathcal{S}_2$. Therefore,
\begin{align*}
        \xi & =  \frac{f_{\mathcal{S}\setminus{\mathcal{A}}^{*}(\mathcal{S})}(\mathcal{S}_2^{*})}{f(\mathcal{S}_2)} \\
        & \leq \frac{f(\mathcal{S}_2^{*})}{f(\mathcal{S}_2)} \leq 1
\end{align*}

\item \textit{proof of inequality \ref{intermediate-1}}: The proof can be done in two steps. Firstly, it can be verified using $f_{\mathcal{A}}(\mathcal{B})=f(\mathcal{A}\cup\mathcal{B})-f(\mathcal{A})$ that 
\begin{multline}
        f(\mathcal{S} \setminus {\mathcal{A}}^{*}(S))=f(\mathcal{S}_2)-
        f_{\mathcal{S} \setminus {\mathcal{A}}^{*}(S)}(\mathcal{S}_2^{*})\\
        +f_{\mathcal{S}_2}(\mathcal{S}_1)-f_{\mathcal{S}\setminus\mathcal{S}^{*}_{1}}(\mathcal{S}^{*}_{1})
\end{multline}
It should be noted that $f_{\mathcal{S}_2}(\mathcal{S}_1)-f_{\mathcal{S}\setminus\mathcal{S}^{*}_{1}}(\mathcal{S}^{*}_{1}) \geq 0$ for two following observations: 
\begin{inparaenum}[i)]
\item $f_{\mathcal{S}_2}(\mathcal{S}_1) \geq f_{\mathcal{S}_2}(\mathcal{S}_1^{*})$ since $f$ is monotone and $\mathcal{S}_1^{*} \subseteq \mathcal{S}_1$; 
\item $f_{\mathcal{S}_2}(\mathcal{S}_1^{*}) \geq f_{\mathcal{S}\setminus\mathcal{S}^{*}_{1}}(\mathcal{S}^{*}_{1})$ since $f$ is submodular and $\mathcal{S}_2 \subseteq \mathcal{S}\setminus\mathcal{S}^{*}_{1}$ (see also Fig. \ref{Venn diagram}). 
\end{inparaenum}
Then we have 
\begin{equation*}
    \begin{split}
                f(\mathcal{S} \setminus {\mathcal{A}}^{*}(S)) & \geq f(\mathcal{S}_2)-
        f_{\mathcal{S} \setminus {\mathcal{A}}^{*}(S)}(\mathcal{S}_2^{*})\\
        & = f(\mathcal{S}_2)-\xi f(\mathcal{S}_2)
    \end{split}
\end{equation*}
Inequality \ref{intermediate-1} proved.

\item \textit{Proof of inequality \ref{intermediate-2}}:
Since it is $\mathcal{S}_2^{*} \neq \emptyset$ which suggests that $\mathcal{S}_1^{+} \neq \emptyset$ and for all paths $a \in \mathcal{S}_1^{+}$ and all elements $b \in \mathcal{S}_2^{*}$ it is $f(a)\geq f(b)$, from Lemma \ref{a_geq_b}, we have 
\begin{equation*}
    \begin{split}
        f_{\mathcal{S}_1^{+}}(\mathcal{S}_2^{*}) & \leq |\mathcal{S}_2^{*}|f(\mathcal{S}_1^{+}) \\
        & \leq \alpha f(\mathcal{S}_1^{+})
    \end{split}
\end{equation*}
Since $|\mathcal{S}_2^{*}| \leq \alpha$. Overall,
\begin{equation}
    \begin{split}
        f(\mathcal{S}_1^{+}) & \geq \frac{1}{\alpha}f_{\mathcal{S}_1^{+}}(\mathcal{S}_2^{*}) \\
        & \geq \frac{1}{\alpha}f_{\mathcal{S}_1^{+} \cup \mathcal{S}_1^{+}}(\mathcal{S}_2^{*}) \\
        & = \frac{1}{\alpha}f_{\mathcal{S}\setminus{\mathcal{A}}^{*}(\mathcal{S})}(\mathcal{S}_2^{*}) \\
        & = \xi  \frac{1}{\alpha} f(\mathcal{S}_2)
    \end{split}
\end{equation}
where inequalities flow from top to down for submodularity, the definition of $\mathcal{S}_1^{+} \cup \mathcal{S}_1^{+}$, and the definition of $\xi$.

\item \textit{Proof of inequality \ref{intermediate-3}}:
Let $b=\frac{1}{\alpha}$. We complete the proof first for the case where $(1-\xi) \geq \xi b$, and then for the case where $(1-\xi) < \xi b$: when $(1-\xi) \geq \xi b$, $\max\{(1-\xi), \xi b\}=1-\xi$ and $\xi \leq \frac{1}{1+b}$; due to the latter, $1-\xi \geq \frac{b}{1+b}=\frac{1}{\alpha+1}$, which suggests inequality \ref{intermediate-3} holds; Finally, when $1-\xi < \xi b$, $\max\{(1-\xi), \xi b\}=\xi b$ and $\xi > \frac{1}{1+b}$; due to the latter, $\xi b > \frac{b}{1+b}=\frac{1}{1+\alpha}$, which also suggests inequality \ref{intermediate-3} holds. Thus the inequality \ref{ineq2} is proved.
\end{inparaenum}

\subsection{Proof of Inequality in Equation (\ref{s2_ineq3})}
We will prove the following,
\begin{equation}\label{ineq3}
    f(\mathcal{S} \setminus {\mathcal{A}}^{*}(S)) \geq \frac{1}{N-\alpha}f(\mathcal{S}_2)
\end{equation}
where  $\mathcal{S}=\mathcal{S}_1 \cup \mathcal{S}_2$ is the solution returned by our algorithm; ${\mathcal{A}}^{*}(S)$ is the optimal removal of $\mathcal{S}$; $\alpha=\max |{\mathcal{A}}|$ is the maximum number of removal from set $\mathcal{S}$ and $N$ is the number of robots.

The following two cases are enough to explain why \eqref{ineq3} holds.
\begin{itemize}
    \item when $\mathcal{S}_2^{*} = \emptyset$, i.e. all paths in $\mathcal{S}_1$ are removed and correspondingly no paths are removed in $\mathcal{S}_2$. Then $f(\mathcal{S} \setminus {\mathcal{A}}^{*}(S))=f(\mathcal{S}_2)$, and \eqref{ineq3} holds.
    \item when $\mathcal{S}_2^{*} \neq \emptyset$, that is there is at least one path left in $\mathcal{S}_1$ and choose one $s$ from any of them, then 
    \begin{equation}
        f(\mathcal{S} \setminus {\mathcal{A}}^{*}(S)) \geq f(s)
    \end{equation}
    since $f$ is non-decreasing. Moreover,
    \begin{equation}
        f(\mathcal{S}_2) \leq \sum_{v \in \mathcal{S}_2}f(v) \leq (N-\alpha)f(s) 
    \end{equation}
    where the first inequality holds due to submodularity and the second holds because the proposed algorithm will construct $\mathcal{S}_1$ and $\mathcal{S}_2$ such that $f(v^{\prime}) \geq f(v), \forall v^{\prime} \in \mathcal{S}_1,  v \in \mathcal{S}_2$.
\end{itemize}
Combine two cases, inequality \eqref{ineq3} is proved.

\subsection{Proof of Inequalities in Equation~\ref{Q_star2global_optimal}}
We will prove the following,
\begin{equation}\label{ineq4}
    f(\mathcal{Q}^{*}) \geq f^{*}
\end{equation}
where  $\mathcal{Q}^{*}$ is the optimal MOP solution to robots $i \in \mathcal{R} \setminus \mathcal{R}(\mathcal{S}_1)$, i.e. the optimal paths for robots corresponding to $\mathcal{S}_2$; $f^{*}$ is the optimal solution to RMOP.

We first prove the inequality \ref{ineq4}. Let $\Pi_{i}$ be the set of all feasible paths for robot $i$, which is hard to compute but is assumed here somehow known for analysis purposes. Then the ground set is defined as 
$$\Pi = \bigcup_{i=1}^{N}\Pi_{i} $$
By definition, $(\Pi, \mathcal{I})$ is a partition matroid if 
\begin{equation} \label{independent_set_large}
    \mathcal{I}=\{I \subseteq \Pi ~|~ |I \cap \Pi_{i}| \leq 1, \forall i=1,2,\ldots,N\}
\end{equation}
where independent set $\mathcal{I}$ represents all possible results of finding paths to $N$ robots.

Similarly, we have another partition matroid $(\Pi, \mathcal{I}^{\prime})$ with
\begin{equation}\label{independent_set_small}
    \begin{split}
        \mathcal{I}^{\prime}=\{I \subseteq \Pi ~|~ |I \cap \Pi_{i}| \leq a,~ & a=1, \forall i \in \mathcal{R}(\mathcal{S}_1); \\ & a=0, \forall i \in  \mathcal{R} \setminus \mathcal{R}(\mathcal{S}_1)\}
    \end{split}
\end{equation}
where independent set $\mathcal{I}^{\prime}$ represents all possible results of finding paths to robots in $\mathcal{R}(\mathcal{S}_1)$ and $\mathcal{I}^{\prime} \subseteq \mathcal{I}$.

In inequality \eqref{ineq4}, L.H.S, for any set $\mathcal{S}_1 \subseteq \Pi$ returned by algorithms with $|\mathcal{S}_1|=|\mathcal{R}(\mathcal{S}_1)|=\alpha$ such that $\mathcal{S}_1 \in \mathcal{I}$ and $\mathcal{S}_1 \in \mathcal{I}^{\prime}$. By definition of $\mathcal{Q}^{*}$
\begin{align}
    f(\mathcal{Q}^{*}) &=\max_{\mathcal{S}_2 \subseteq \Pi, \mathcal{S}_2 \cup \mathcal{S}_1 \in \mathcal{I}}f(\mathcal{S}_2)  \label{Q_definition} \\
    &=\max_{\mathcal{S}_2 \subseteq \Pi \setminus \mathcal{S}_1, \mathcal{S}_2 \cup \mathcal{S}_1 \in \mathcal{I}}f(\mathcal{S}_2) \label{Q_trivial}\\
    & \geq \min_{\Tilde{\mathcal{S}}_1 \subseteq \Pi, \Tilde{\mathcal{S}}_1 \in \mathcal{I}^{\prime}}~ \max_{\mathcal{S}_2 \subseteq \Pi \setminus \Tilde{\mathcal{S}}_1, \mathcal{S}_2 \cup \Tilde{\mathcal{S}}_1 \in \mathcal{I}}f(\mathcal{S}_2) \label{Q_min}\\
    & = \min_{\Tilde{\mathcal{S}}_1 \subseteq \Pi, \Tilde{\mathcal{S}}_1 \subseteq \in \mathcal{I}^{\prime}}~ \max_{\Tilde{\mathcal{S}} \subseteq \Pi, \Tilde{\mathcal{S}} \in  \mathcal{I}, \Tilde{\mathcal{S}}_1 \subseteq \Tilde{\mathcal{S}}}f(\Tilde{\mathcal{S}} \setminus \Tilde{\mathcal{S}}_1 ) \label{Q_change_variable}\\
    & = \min_{\Tilde{\mathcal{S}}_1 \subseteq \Pi, \Tilde{\mathcal{S}}_1 \subseteq \in \mathcal{I}^{\prime}}~ \max_{\Tilde{\mathcal{S}} \subseteq \Pi, \Tilde{\mathcal{S}} \in \mathcal{I}}f(\Tilde{\mathcal{S}} \setminus \Tilde{\mathcal{S}}_1 ) \label{Q_redundant}\\
    & \triangleq h
\end{align}
\eqref{Q_definition} is the definition of $\mathcal{Q}^{*}$; \eqref{Q_definition} to \eqref{Q_trivial} holds since we have a partition matroid with independent set defined as \eqref{independent_set_large} and the intersection of $\mathcal{S}_1$ and $\mathcal{S}_2$ will be empty; In \eqref{Q_trivial}, $\mathcal{S}_1$ is a specific subset of $\Pi$  and in the independent set $\mathcal{I}^{\prime}$. If we think $\mathcal{S}_1$ as an instantiation of a certain `set variable' $\Tilde{\mathcal{S}}_1$, we can find a minimal value (R.H.S of \eqref{Q_min}) through optimizing over $\Tilde{\mathcal{S}}_1$ and \eqref{Q_trivial} should be greater than minimal value, i.e. \eqref{Q_trivial} to \eqref{Q_min} holds; next we use the trick of changing of variables: let $\Tilde{\mathcal{S}}=\Tilde{\mathcal{S}}_1 \cup \mathcal{S}_2$ and $\mathcal{S}_2=\Tilde{\mathcal{S}} \setminus \Tilde{\mathcal{S}}_1$ due to the fact that $\Tilde{\mathcal{S}}_1$ and $\mathcal{S}_2$ are disjoint. As a result, \eqref{Q_min} to \eqref{Q_change_variable} holds; notice that in \eqref{Q_change_variable} the minimization operation over $\Tilde{\mathcal{S}}_1$ can guarantee that the solution satisfies $\Tilde{\mathcal{S}}_1 \subseteq \Tilde{\mathcal{S}}$ and we can remove the redundant constraint $\Tilde{\mathcal{S}}_1 \subseteq \Tilde{\mathcal{S}}$ in maximization, i.e. \eqref{Q_change_variable} to \eqref{Q_redundant} holds; and we define \eqref{Q_redundant} as $h$.

In the following, we basically show that min-max function is no less than max-min function. Notice that for any $\mathcal{S} \subseteq \Pi$ such that $\mathcal{S} \in \mathcal{I}$, and any set $\Tilde{\mathcal{S}}_1 \subseteq \Pi$ such that $\Tilde{\mathcal{S}}_1 \in \mathcal{I}^{\prime}$, it holds 
\begin{equation}
    \max_{\Tilde{\mathcal{S}} \subseteq \Pi, \Tilde{\mathcal{S}} \in \mathcal{I}}f(\Tilde{\mathcal{S}} \setminus \Tilde{\mathcal{S}}_1 ) \geq f(\mathcal{S \setminus \Tilde{\mathcal{S}}}_1)
\end{equation}
which implies:
\begin{equation}\label{any_S}
    \begin{split}
        h &\geq \min_{\Tilde{\mathcal{S}}_1 \subseteq \Pi, \Tilde{\mathcal{S}}_1  \in \mathcal{I}^{\prime}}~f(\mathcal{S \setminus \Tilde{\mathcal{S}}}_1) \\
        &=\min_{\Tilde{\mathcal{S}}_1 \subseteq \mathcal{S}, \Tilde{\mathcal{S}}_1  \in \mathcal{I}^{\prime}}~f(\mathcal{S \setminus \Tilde{\mathcal{S}}}_1) \\
    \end{split}
\end{equation}
Notice that \eqref{any_S} holds for all $\mathcal{S} \in \mathcal{I}$. As a result,

\begin{equation}
    h  \geq \max_{\Tilde{\mathcal{S}} \subseteq \Pi, \Tilde{\mathcal{S}} \in \mathcal{I}}~\min_{\Tilde{\mathcal{S}}_1 \subseteq \Tilde{\mathcal{S}}, \Tilde{\mathcal{S}}_1  \in \mathcal{I}^{\prime}}~f(\Tilde{\mathcal{S}} \setminus \Tilde{\mathcal{S}}_1) \label{max_S}
\end{equation}
    
Next we consider the minimization operation of \eqref{max_S}. For any $\Tilde{\mathcal{S}} \in \mathcal{I}$,
\begin{align}
    \min_{\Tilde{\mathcal{S}}_1 \subseteq \Tilde{\mathcal{S}}, \Tilde{\mathcal{S}}_1  \in \mathcal{I}^{\prime}}~f(\Tilde{\mathcal{S}} \setminus \Tilde{\mathcal{S}}_1)  \geq \min_{\Tilde{\mathcal{S}}_1 \subseteq \Tilde{\mathcal{S}}, |\Tilde{\mathcal{S}}_1| \leq \alpha}~f(\Tilde{\mathcal{S}} \setminus \Tilde{\mathcal{S}}_1) \label{min2alpha}
\end{align}
Reasons for \eqref{min2alpha} to hold: the L.H.S of \eqref{min2alpha} only allows remove $\Tilde{\mathcal{S}}_1 \in \mathcal{I}^{\prime}$ and $|\Tilde{\mathcal{S}}_1|$ can go up to $|\mathcal{R}(\mathcal{S}_1)|=\alpha$ (refer to definition of $\mathcal{I}^{\prime}$); by contrast, R.H.S of \eqref{min2alpha} is less constrained and can also remove up to $\alpha$ elements from $\Tilde{\mathcal{S}}$. Thus, the R.H.S can get the result no greater than L.H.S. As a result,
\begin{align}
        h & \geq \max_{\Tilde{\mathcal{S}} \subseteq \Pi, \Tilde{\mathcal{S}} \in \mathcal{I}}~\min_{\Tilde{\mathcal{S}}_1 \subseteq \Tilde{\mathcal{S}}, \Tilde{\mathcal{S}}_1  \in \mathcal{I}^{\prime}}~f(\Tilde{\mathcal{S}} \setminus \Tilde{\mathcal{S}}_1) \\
        &  \geq \max_{\Tilde{\mathcal{S}} \subseteq \Pi, \Tilde{\mathcal{S}} \in \mathcal{I}}~\min_{\Tilde{\mathcal{S}}_1 \subseteq \Tilde{\mathcal{S}}, |\Tilde{\mathcal{S}}_1| \leq \alpha}~f(\Tilde{\mathcal{S}} \setminus \Tilde{\mathcal{S}}_1) \label{back2alpha}\\
        & = f(\mathcal{S}^{*} \setminus \mathcal{A}^{*}(\mathcal{S}^{*})) \label{original_problem}\\
        & = f^{*}
\end{align}
In sum, 
$$f(\mathcal{Q}^{*}) \geq h \geq f^{*}.$$

\end{document}